%% file: sample_paper.tex
\documentclass[twoside]{article}
\usepackage[accepted]{aistats2026}
\usepackage[T1]{fontenc} 

\usepackage{graphicx} % Required for inserting images
\usepackage[utf8]{inputenc}
\usepackage{amsmath}
\usepackage{amsthm}
\usepackage{array}
\usepackage{balance}
\usepackage{multirow}
\usepackage{multicol}
\usepackage{microtype}
\usepackage{psfrag}
\usepackage{amssymb}
\usepackage{xcolor}
\usepackage{relsize}
\usepackage{mathtools}
\usepackage{setspace}
\usepackage{nccmath}
\usepackage{multirow}
\usepackage{multicol}
\usepackage{float}
\usepackage{amsthm}
\usepackage{dsfont}
\usepackage{hyperref}
\usepackage[hyphenbreaks]{breakurl} % Works with hyperref
\usepackage{xurl}  
\usepackage[font=small,labelfont=bf,
   justification=justified,
   format=plain]{caption}
\usepackage{subcaption}
\usepackage{algpseudocode}
\usepackage{algorithm}
\usepackage{acronym}
\newtheorem{theorem}{Theorem}
\newtheorem{remark}{Remark}
\newtheorem{corollary}{Corollary}
\newcommand\V[1]  { \mathbf{#1} }
\newcommand\B[1]  { \boldsymbol{#1} }

\newcommand\set[1] {\mathcal{#1}}

\newcommand{\up}[1]{\mathrm{#1}}

\newcolumntype{P}[1]{>{\centering\arraybackslash}p{#1}}
\DisableLigatures[<,>]{encoding=T1,family=tt*}

\acrodef{RRM}[RRM]{robust risk minimization}
\acrodef{ERM}[ERM]{empirical risk minimization}
\acrodef{MRC}[MRC]{minimax risk classifier}
\acrodef{LR}[LR]{logistic regression}
\acrodef{SOA}[SOA]{state-of-the-art}
\acrodef{RFF}[RFF]{random Fourier features}
\acrodef{DNN}[DNN]{deep neural network}
\acrodef{SGD}[SGD]{stochastic gradient descent}
\acrodef{MAE}[MAE]{mean absolute error}
\acrodef{MLP}[MLP]{multi-layer perceptron}
\acrodef{ECE}[ECE]{expected calibration error}
\acrodef{MGCE}[MGCE]{minimax generalized cross-entropy}
\usepackage[round]{natbib}

\begin{document}
\runningauthor{Kartheek Bondugula, Santiago Mazuelas, Aritz Pérez, Anqi Liu}

% If your paper is accepted and the title of your paper is very long,
% the style will print as headings an error message. Use the following
% command to supply a shorter title of your paper so that it can be
% used as headings.
%
%\runningtitle{I use this title instead because the last one was very long}

% If your paper is accepted and the number of authors is large, the
% style will print as headings an error message. Use the following
% command to supply a shorter version of the author names so that
% they can be used as headings (for example, use only the surnames)
%
%\runningauthor{Surname 1, Surname 2, Surname 3, ...., Surname n}

\twocolumn[

\aistatstitle{Minimax Generalized Cross-Entropy}

\aistatsauthor{
Kartheek Bondugula$^{1}$ \ \ \ \ \ \
Santiago Mazuelas$^{1,2}$ \ \ \ \ \ \
Aritz P\'erez$^1$ \ \ \ \ \ \
Anqi Liu$^3$
}

\aistatsaddress{
$^1$ Basque Center for Applied Mathematics, Bilbao, Spain \\
$^2$ Ikerbasque, Basque Foundation for Science \\
$^3$ Johns Hopkins University, Baltimore, USA \\
\texttt{\{kbondugula, smazuelas, aperez\}@bcamath.org}, \ \texttt{aliu.cs@jhu.edu}
}
]

\begin{abstract}
  Loss functions play a central role in supervised classification. \mbox{Cross-entropy} (CE) is widely used, whereas the \ac{MAE} loss can offer robustness but is difficult to optimize. 
  Interpolating between the CE and \ac{MAE} losses, generalized \mbox{cross-entropy} (GCE) has recently been introduced to provide a trade-off between optimization difficulty and robustness.
 Existing formulations of GCE result in a non-convex optimization over classification margins that is prone to underfitting, leading to poor performances with complex datasets.  In this paper, we propose a minimax formulation of generalized cross-entropy (MGCE) that results in a convex optimization over classification margins. Moreover, we show that MGCEs can provide an upper bound on the classification error. The proposed bilevel convex optimization can be efficiently implemented using stochastic gradient computed via implicit differentiation. Using benchmark datasets, we show that MGCE achieves strong accuracy, faster convergence, and better calibration, especially in the presence of label noise.
\end{abstract}

% \vspace{-0.2cm}
\section{INTRODUCTION}
% \vspace{-0.2cm}

The learning process in supervised classification is strongly influenced by the choice of loss function. Cross-entropy (CE) has long been the standard choice for classification tasks in machine learning. On the other hand, the \acf{MAE} loss is known to offer increased robustness to noise \citep{ghosh2017robust}, but is difficult to optimize \citep{zhang2018generalized}. To address this trade-off between optimization difficulty and robustness, a more general class of loss functions has been recently proposed, termed as generalized \mbox{cross-entropy} (GCE), which interpolate between the \ac{MAE} and CE losses through a tunable parameter \citep{zhang2018generalized, sypherd2022tunable}. These losses have been shown to alleviate the optimization challenges of the MAE loss while achieving improved robustness over CE \citep{zhang2018generalized}.

The GCE losses have been defined by leveraging probabilistic losses known as $\alpha$-losses, which vary depending on the value of $\alpha$ \citep{ferrari2010maximum, liao2018tunable, sypherd2020alpha, sypherd2022tunable, sypricklal:22}. Specifically, GCEs are obtained by first transforming classification margins into probabilities using a link function, and then using an $\alpha$-loss over the resulting probabilities. In existing methods, the same link function (softmax) is used for all values of $\alpha$, and the final classification loss becomes \mbox{non-convex} over margins. As a result, existing methods based on GCE are susceptible to underfitting, particularly when trained on complex datasets \citep{zhou2021asymmetric, staats2025enhancing}.

Recently, several works have proposed a minimax approach for classification with the 0-1 loss that results in a convex optimization over the classification margins \citep{asif2015adversarial, fathony2016adversarial, fathony2018consistent, MazRomGrun:23}. Since the \ac{MAE} loss coincides with the 0-1 loss of randomized classification rules, such approaches provide a convex formulation for the \ac{MAE} loss. Convexity is a desirable property for a loss function \citep{zhang2004statistical, bartlett2006convexity}, and convex surrogates often provide favorable optimization landscapes that improve convergence rates in large-scale machine learning \citep{agrawal2015}. However, it is still unclear whether it is possible to find a convex optimization solution to GCEs.

In this paper, we propose a minimax approach for generalized cross-entropy (MGCE) that results in a bilevel convex optimization over classification margins. The presented methods achieve the benefits provided by general $\alpha$-losses and avoid the underfitting issues of existing GCE methods. Specifically, the contributions of the paper can be summarized as follows:
\vspace{-0.3cm}
\begin{itemize}
    \item We present MGCE methods that minimize the worst-case expected \mbox{$\alpha$-loss} over distributions in an uncertainty set. 
    % In addition, we establish the multiclass Bayes consistency of MGCE for general $\alpha$-losses.
    \item We show that the MGCE method can provide an upper bound on the classification error, and that the MGCE margin loss is classification-calibrated. Moreover, we characterize the relation between the worst-case distribution in the uncertainty set and the corresponding minimax classifier for general $\alpha$-losses.
    \item We develop an efficient learning algorithm for the proposed bilevel convex optimization. Specifically, we derive the corresponding stochastic gradients using implicit differentiation, and propose a bisection-based algorithm to compute them efficiently, enabling scalability to complex models such as \acp{DNN}.
    \item Using large benchmark datasets and \acp{DNN}, we demonstrate that MGCE methods achieve strong accuracy and faster convergence. Additionally, the MGCEs yield better calibrated models than GCEs, particularly under label noise.
\end{itemize}

\textbf{Notations:} For a set $\set{S}$, we denote its cardinality as $|\set{S}|$; bold lowercase and uppercase letters represent vectors and matrices, respectively; for a vector $\V{b}$ and an index $i$, $\up{b}_i$ denotes the component at index $i$; for a vector-valued function $f(x)$ and an index $i$, $f(x)_i$ denotes the $i$-th component of $f(x)$; $\V{I}$ denotes the identity matrix; $\mathds{1} \{\cdot\}$ denotes the indicator function; $\B{1}$ denotes a vector of ones; for a vector $\V{v}$, $|\V{v}|$ and $(\V{v})_+$, denote its component-wise absolute value and positive part, respectively; for a scalar $\up{v}$, $(\up{v})_+^\beta$ denotes the positive part raised to the power of $\beta$; $\otimes$ denotes the Kronecker product; $\preceq$ and $\succeq$ denote vector inequalities; $\mathbb{E}_{\up{p}}\{\,\cdot\,\}$ denotes the expectation of its argument with respect to distribution $\up{p}$.

\vspace{-0.1cm}
\section{RELATED WORK}
\vspace{-0.2cm}
This section introduces the main terminology, reviews existing approaches for GCE, and briefly outlines the minimax framework of \cite{MazSheYua:22, MazRomGrun:23}.

\vspace{-0.1cm}
\subsection{Preliminaries}
\vspace{-0.1cm}
Let $\set{X}$ and $\set{Y}$ be the set of features and labels in a supervised classification problem with $k$ classes. The goal is to find a classifier $\up{h}: \set{X} \rightarrow [0,1]^k$ that maps each feature vector $x \in \set{X}$ to labels' probabilities. Specifically, given $x \in \set{X}$, we denote by $\up{h}(x)$ the vector of class probabilities  given $x$, where the probability of label $y \in \set{Y}$ is given by the $y$-th component of the vector, $\up{h}(x)_y$. Commonly, the probability vector is obtained using a link function, such as softmax function, over the classifier margins. The classifier margins over different classes are represented by the vector 
\begin{equation}
\label{eq:margins}
f(x, \B{\mu}) = [\Phi(x,1)^\top\B{\mu}, \Phi(x,2)^\top\B{\mu},\ldots,\Phi(x,k)^\top\B{\mu}], 
\end{equation}
where $\Phi(x,y) \in \mathbb{R}^m$ is the feature mapping corresponding to the instance-label pair $(x,y)$, e.g., one-hot encoding of the penultimate layer in a \ac{DNN}, and $\B{\mu} \in \mathbb{R}^m$ are coefficients learned by the classifier. The margin corresponding to class $y$ is the $y$-th component of $f(x,\B{\mu})$, i.e., $f(x,\B{\mu})_y=\Phi(x,y)^\up{T}\B{\mu}$.

We denote by $\Delta(\set{X} \times \set{Y})$ the set of probability distributions on $\set{X} \times \set{Y}$. 
% For a given probabilistic loss $\ell$, . 
% $\ell(\up{h}, \up{p}) = \mathbb{E}_{\up{p}}\ell(\up{h},(x,y))$ denotes the expected loss of the probabilistic classifier $\up{h}$ with respect to distribution $\up{p} \in \Delta(\mathcal{X}\times\mathcal{Y})$. 
If $\up{p}^*$ denotes the underlying distribution of the instance-label pairs, the classification risk of a rule $\up{h}$ under the loss $\ell$ is given by
\begin{equation}
\mathcal{R}_{\ell}(\up{h}) = \mathbb{E}_{\up{p}^*}\ell(\up{h},(x,y)).
\end{equation}
The most commonly used probabilistic loss, especially in modern \acp{DNN}, is the CE that is defined as $\ell_{\text{CE}}(\up{h}, (x,y)) = -\log\up{h}(x)_y$. However, such loss function can lead to overfitting and miscalibrated models \citep{guo2017calibration}. Besides CE, the \mbox{\ac{MAE}} loss function defined as $\ell_{\text{\ac{MAE}}}(\up{h}, (x,y)) = 1 - \up{h}(x)_y$ is more robust to noise while may pose significant challenges for optimization \citep{zhang2018generalized}. The \ac{MAE} loss coincides with the 0-1 loss of randomized classification rules. Specifically, if $\up{h}(x)_y$ is the probability with which rule $\up{h}$ assigns label $y$ to instance $x$, the value $1-\up{h}(x)_y$ is the probability of incorrectly labeling the example $(x,y)$.

% \vspace{-0.3cm}
\subsection{Generalized cross-entropy}
\vspace{-0.1cm}
The GCE losses have recently been proposed as a family of loss functions that interpolate between the CE and \ac{MAE} losses, thereby offering a trade-off between robustness and optimization difficulty \citep{zhang2018generalized, sypherd2020alpha, sypricklal:22}. GCE losses are obtained using a family of probabilistic losses referred to as $\alpha$-loss \citep{sypherd2020alpha} and defined as
\begin{equation}
    \ell_{\alpha}(\up{h}, (x, y)) = \frac{\alpha}{\alpha-1} (1 - \up{h}(x)_y^\frac{\alpha - 1}{\alpha}).
\end{equation}
The minimum expected $\alpha$-loss (Bayes risk) can be related to generalized notions of entropy as described in \cite{MazSheYua:22}. In particular, the Bayes risk corresponding to an $\alpha$-loss coincides with the Arimoto conditional entropy, which is closely related to Tsallis entropy.

For notational convenience, we  parametrize $\alpha$-losses using \mbox{$\beta=\alpha/(\alpha-1)$} leading to the following formulation
\begin{equation}
    \label{eq:loss_beta}
    \ell_{\beta}(\up{h}, (x, y)) = \beta (1 - \up{h}(x)_y^\frac{1}{\beta}).
\end{equation}
For $\beta \geq 1$, such probabilistic losses interpolate between the \ac{MAE} loss and the CE loss: specifically, $\beta = 1$ recovers the \ac{MAE} loss, while $\beta \to \infty$ yields the CE loss. Figure~\ref{fig:beta_loss} shows the variation of the loss function with respect to $\beta$. For $\beta \in [1,\infty)$, the corresponding loss is noise-robust, since the sum of the losses over all classes is bounded as
\begin{equation}
    \beta(k - k^\frac{{\beta - 1}}{\beta})\leq \sum_{y=1}^k\ell_{\beta}(\up{h}, (x, y)) \leq \beta(k - 1),
\end{equation} 
as shown in \cite{zhang2018generalized}.

Existing GCE methods obtain classification losses from the $\alpha$-losses in \eqref{eq:loss_beta} by transforming the classifier's margins to probabilities using a link function that does not depend on the value of $\alpha$ \citep{zhang2018generalized, sypherd2020alpha}. Such methods use the softmax link so that the probability of class $y$ for a given margin $f(x, \B{\mu})$ is obtained as \mbox{$\up{h}(x)_y = e^{f(x, \B{\mu})_y} / \sum_{i=1}^k e^{f(x, \B{\mu})_i}$}. Therefore, existing GCE methods address the following minimization at training
\begin{equation}
\label{eq:erm}
\min_{\B{\mu}} \sum_{i=1}^n \beta\Bigg( 1 - \frac{e^{f(x_i, \B{\mu})_{y_i} / \beta}}{\Big(\sum_{j=1}^{k} e^{f(x_i, \B{\mu})_j}\Big)^{1/\beta}} \Bigg),
\end{equation}
where $f(x_i, \B{\mu})_j$ denotes the classification margin for instance $x_i$ and class $j$ as defined in \eqref{eq:margins}.

\begin{figure}
    \centering
    \psfrag{12345678912345}[l][l][0.9]{CE}
    \psfrag{12345678912346}[l][l][0.9]{$\beta = 2.0$}
    \psfrag{12345678912347}[l][l][0.9]{$\beta=1.4$}
    \psfrag{12345678912348}[l][l][0.9]{$\beta=1.1$}
    \psfrag{12345678912349}[l][l][0.9]{\ac{MAE}}
    \psfrag{hyx}[c][t][0.9]{Class probability $\up{h}(x)_y$}
    \psfrag{loss}[c][t][0.9]{Loss $\ell(\up{h}, (x,y))$}
    \psfrag{0.2}[c][c][0.8]{0.2}
    \psfrag{0.4}[c][c][0.8]{0.4}
    \psfrag{0.6}[c][c][0.8]{0.6}
    \psfrag{0.8}[c][c][0.8]{0.8}
    \psfrag{1}[c][c][0.8]{1}
    \psfrag{0}[r][c][0.8]{0}
    \psfrag{2}[r][r][0.8]{2}
    \psfrag{4}[r][r][0.8]{4}
    \psfrag{3}[r][r][0.8]{3}
    \psfrag{5}[r][r][0.8]{5}
    \psfrag{10}[r][r][0.8]{10}
    \includegraphics[width=0.47\textwidth]{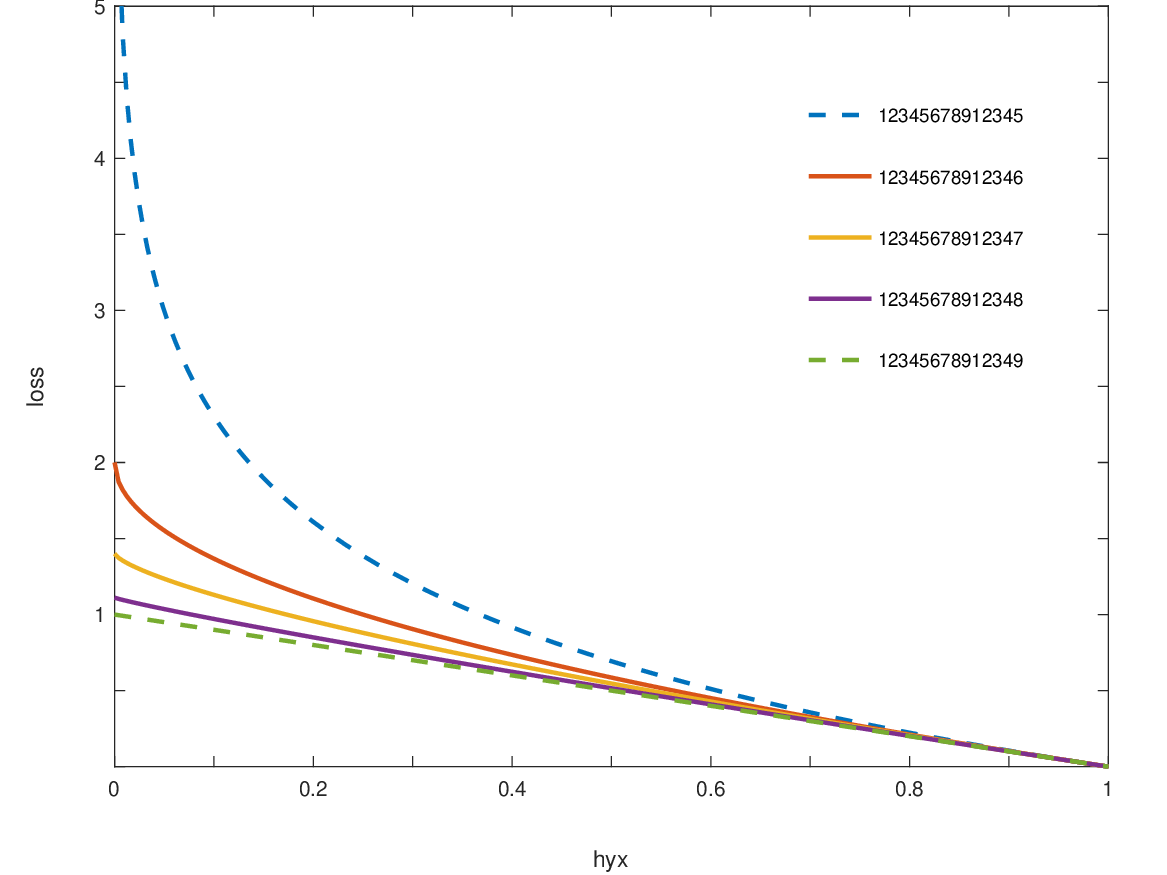}
    % \captionsetup{labelfont={it}, font=small}
    \caption{Relation between $\beta$ and the resulting loss function. For $\beta =1$, the loss corresponds to the \ac{MAE} while for $\beta=\infty$, it corresponds to CE. For $\beta \in(1,\infty)$, the loss interpolates between the \ac{MAE} and CE.}
    \label{fig:beta_loss}
    \vspace{-0.3cm}
\end{figure}

For $1 \leq \beta < \infty$, the classification loss in \eqref{eq:erm} is non-convex over classification margins \citep{zhang2018generalized, sypherd2022tunable}, resulting in optimization difficulties especially in complex datasets \citep{zhou2021asymmetric, staats2025enhancing}.

Convexity remains a highly desirable property for a loss function, as it can enable tractable optimization \citep{zhang2004statistical, bartlett2006convexity}. Minimax approaches have recently provided convex formulations for the 0-1 loss (and hence for the \ac{MAE} loss) \citep{asif2015adversarial, fathony2016adversarial, fathony2018consistent, MazZanPer:20, MazSheYua:22, MazRomGrun:23}. In the following, we briefly describe the minimax framework that will be used in Section~\ref{sec:MGCE} to derive the proposed MGCE methods for general $\alpha$-losses. 

\vspace{-0.1cm}
\subsection{Minimax framework}
\vspace{-0.2cm}
The minimax framework obtains classification rules by minimizing the worst-case expected loss over distributions in an uncertainty set \citep{MazSheYua:22, MazRomGrun:23}. Specifically, for a given $\ell$-loss function, the minimax classification rule $\up{h}_\ell$ is obtained by solving
\begin{equation}
\label{eq:minmaxrisk}
\up{V}_{\ell} = \underset{\up{h}}{\min} \, \underset{\up{p} \in \mathcal{U}}{\max} \; \mathbb{E}_{\up{p}}\ell(\up{h},(x,y)),
\end{equation}
where $\up{V}_{\ell}$ denotes the worst-case risk and  $\set{U}$ denotes an uncertainty set of distributions. The minimax classification rule  $\up{h}_\ell$ minimizes the expected loss with respect to a worst-case distribution in $\set{U}$. When the true underlying distribution $\up{p}^*$ is in the uncertainty set, the worst-case risk upper bounds the classification risk of the minimax rule $\up{h}_{\ell}$ under the $\ell$-loss, that is,
\begin{equation}
\label{eq:upper_bound_alpha_risk}
\mathcal{R}_{\ell}(\up{h}_{\ell}) \leq \up{V}_{\ell}.
\end{equation}

The uncertainty set $\set{U}$ in \eqref{eq:minmaxrisk} is defined as
\begin{align}
\label{eq:us}
\mathcal{U} = \{\up{p} \in \Delta (\mathcal{X} \times \mathcal{Y}) : |\mathbb{E}_{\up{p}}\{\Phi(x,y)\} - \B{\tau}| \preceq \B{\lambda} \nonumber \\ \text{and} \ \up{p}(x) = \up{p}^*(x)\},
\end{align}
with $\up{p}^*(x)$ denoting the underlying distribution over the instances. The vector $\B{\tau}$ denotes the expectation estimates associated with the feature mapping $\Phi$, and $\B{\lambda} \succeq \V{0}$ is a confidence vector that accounts for inaccuracies in these estimates. The mean and confidence vectors can be obtained from the training samples $\{(x_i,y_i)\}_{i=1}^n$ as
\begin{align}
\label{eq:tau}
\B{\tau}=\frac{1}{n}\sum_{i=1}^{n}\Phi(x_i,y_i), \  \B{\lambda}= \lambda_0\V{s},
\end{align}
where $\V{s}$ denotes the vector formed by the component-wise sample standard deviations of $\Phi$ in the training samples and $\lambda_0$ is a regularization parameter. In particular, taking \mbox{$\lambda_0 = \sqrt{(\log m + \log \frac{2}{\delta})/2}$}, the underlying distribution $\up{p}^*$ of the training samples lies in the uncertainty set with probability $1-\delta$ (see e.g., Theorem 4 in \cite{MazRomGrun:23}).

Previous works develop minimax methods as in \eqref{eq:minmaxrisk} for the \ac{MAE} and CE losses \citep{MazSheYua:22, MazRomGrun:23}. In addition, \cite{MazSheYua:22} explores the usage of $\alpha$-losses under a minimax formulation that leads to an optimization that cannot be efficiently addressed in large-scale scenarios (not amenable for stochastic gradient descent (SGD) methods). 

To address the limitations of the existing methods, we introduce a minimax formulation under the $\alpha$-loss that results in a convex optimization over the classification margins, and effectively scales with complex datasets.

%%%
\vspace{-0.1cm}
\section{MINIMAX GENERALIZED CROSS-ENTROPY}
\label{sec:MGCE}
\vspace{-0.2cm}
In this section, we present the MGCE formulation based on the minimax framework in \eqref{eq:minmaxrisk} and the $\alpha$-loss in \eqref{eq:loss_beta}. In addition, we provide theoretical results that establish performance guarantees as well as an interpretation in terms of the worst-case distribution in $\set{U}$. 

The MGCE formulation minimizes the optimization problem in \eqref{eq:minmaxrisk} over $\alpha$-losses that results in the following bilevel convex optimization problem (see \mbox{Appendix~\ref{sec:derive_mgce}} for the detailed proof)
\begin{equation}
\label{eq:mrc}
\begin{array}{c}
\up{V}_{\beta} = \underset{\B{\mu} \in \mathbb{R}^m} {\min} - \B{\tau}^\top\B{\mu} + \B{\lambda}^\top|\B{\mu}| - \mathbb{E}_{\up{p}^*(x)} \{\varphi_\beta(x, \B{\mu})\}, \\
 % \text{s.t.} \ \underset{y \in \set{Y}}{\sum}\Big(\frac{f(x_i, \B{\mu})_y + \nu_i}{\beta} + 1\Big)^{\beta}_{+} = 1,   i=1,2,\ldots,n.
\end{array}
\end{equation}
where the function $\varphi_\beta(x, \B{\mu})$ is obtained as 
\begin{equation}
    \label{eq:implicit_function}
    \begin{array}{cl}
         \varphi_\beta(x, \B{\mu}) =& \underset{\nu}{\max} \ \nu  \\
         & \text{s.t.} \  \underset{y \in \set{Y}}{\sum}\Big(\frac{f(x, \B{\mu})_y + \nu}{\beta} + 1\Big)^{\beta}_{+} \leq 1.
    \end{array}
\end{equation}

Since the function defining the constraint is monotonically increasing in $\nu$, the function $\varphi_\beta$ is implicitly defined over the parameters $\B{\mu}$ as
\setlength{\arraycolsep}{2pt} % default ~5pt
\renewcommand{\arraystretch}{1.5}
\begin{equation}
\label{eq:optimality_condition}
    F(x, \B{\mu}, \varphi_\beta(x,\B{\mu})) =1, 
\end{equation}
where $F(x, \B{\mu}, \varphi_\beta(x,\B{\mu})) = \underset{y \in \set{Y}}{\sum}\Big(\frac{f(x, \B{\mu})_y + \varphi_\beta(x,\B{\mu})}{\beta} + 1\Big)^{\beta}_{+}$.

Note that the optimization in \eqref{eq:mrc} is convex over $\B{\mu}$ for $\beta\in[1,\infty)$ since the function $\varphi_\beta(x,\B{\mu})$ is concave in $\B{\mu}$. Specifically, if $\varphi_\beta(x,\B{\mu}_1)=\nu_1$ and $\varphi_\beta(x,\B{\mu}_2)=\nu_2$ we have that for any $t \in [0,1]$, $t\nu_1+(1-t)\nu_2$ is feasible for \eqref{eq:implicit_function} taking $\B{\mu}=t\B{\mu}_1+(1-t)\B{\mu}_2$, because the constraint in \eqref{eq:implicit_function} is convex over $\B{\mu}$ ($f(x,\B{\mu})$ in \eqref{eq:margins} is linear and $\beta>=1$). Then, the concavity of function $\varphi_\beta$ follows because for any $t \in [0,1]$, we have
\begin{align*}
\small
\varphi_\beta(x,t\B{\mu}_1+(1-t)\B{\mu}_2)\geq & \; t\nu_1+(1-t)\nu_2 \\ \geq &\;t\varphi_\beta(x,\B{\mu}_1)+(1-t)\varphi_\beta(x,\B{\mu}_2)
\end{align*}
as the maximum in the definition of \mbox{$\varphi_\beta(x,t\B{\mu}_1+(1-t)\B{\mu}_2)$} is not lower than the value of any feasible solution.

The convex optimization problem in \eqref{eq:mrc} is a bilevel optimization problem due to the definition of function $\varphi_\beta(x,\B{\mu})$. For $\beta \in (1,\infty)$, a closed form expression of the function cannot be attained while for the special cases $\beta=1$ and $\beta=\infty$ is given by \cite{MazSheYua:22}. However, in Section~\ref{sec:optimization} we show  that efficient learning can be achieved using stochastic gradients obtained by implicit differentiation.

\textbf{The resulting classification rule} $\up{h}_\beta$ corresponding to the MGCE method is given by the solution $\B{\mu}^*$ of \eqref{eq:mrc}. Specifically, for each instance $x \in \set{X}$ such rule assigns the probability corresponding to label $y \in \set{Y}$ as
\begin{equation}
\label{eq:soft_clf_mrc}
    \up{h}_\beta(x)_y = \Big(\frac{f(x, \B{\mu}^*)_y + \varphi_\beta(x, \B{\mu}^*)}{\beta} + 1\Big)^\beta_{+},
\end{equation}
where such an expression provides valid probabilities due to the definition of $\varphi_\beta$ in \eqref{eq:optimality_condition}.

The proposed MGCE formulation can also be viewed as obtaining classification rule $\up{h}_\beta$ from the classifier margins using the link function in \eqref{eq:soft_clf_mrc}. This link is tailored to the loss parameter $\beta$, whereas in existing methods the softmax function is used as a fixed link, independent of $\beta$. For $\beta \rightarrow \infty$, the link function in \eqref{eq:soft_clf_mrc} also reduces to the softmax function (see Appendix~\ref{sec:softmax_convg}), and in this case the loss coincides with CE.

The minimax formulation described above enables several theoretical guarantees. In the following, we establish a relation between the minimax probabilities and the worst-case distribution within the uncertainty set. Furthermore, we show that the proposed formulation not only bounds the classification risk associated with $\alpha$-losses, but also provides an upper bound on the classification error of the MGCE classification rule.

\subsection{Relation between worst-case distributions and minimax classifier}
\label{sec:worst-case-dist}
In this section, we characterize the worst-case distributions corresponding with the proposed MGCE formulation.
The following theorem presents the general relation between the worst-case distribution and the minimax probabilities for any loss parameter $\beta$.

\begin{theorem}
\label{th:minimax_worst_case}
Given a loss function $\ell_\beta$, if $\up{h}_\beta$ is the minimax classifier in \eqref{eq:soft_clf_mrc}, the worst-case distribution $\up{p}_\beta \in \arg \underset{\up{p} \in \set{U}}{\max} \ \mathbb{E}_{\up{p}}\ell_\beta(\up{h}_\beta,(x,y))$ is given by
\begin{equation}
\label{eq:worst_case_probs}
    \up{p}_\beta(y|x) = \frac{\up{h}_\beta(x)_y^\frac{\beta-1}{\beta}}{\sum_{y=1}^k \up{h}_\beta(x)_y^\frac{\beta-1}{\beta}}, \ \ \forall x\in \set{X}, \ y \in \set{Y}.
\end{equation}
Reciprocally, if $\up{p}_\beta$ is the worst-case distribution corresponding to the minimax problem in \eqref{eq:minmaxrisk}, that is, $ \up{p}_\beta \in \arg \underset{\up{p} \in \mathcal{U}}{\max} \, \underset{\up{h}}{\min}\; \mathbb{E}_{\up{p}}\ell_\beta(\up{h},(x,y))$, the minimax classifier in \eqref{eq:soft_clf_mrc} satisfies $\up{h}_\beta \in \arg  \underset{\up{h}}{\min}\; \mathbb{E}_{\up{p}_\beta}\ell_\beta(\up{h},(x,y))$ and is given by
\begin{equation}
\label{eq:minmax_probs}
    \up{h}_\beta(x)_y = \frac{\up{p}_\beta(y|x)^\frac{\beta}{\beta-1}}{\sum_{y=1}^k \up{p}_\beta(y|x)^\frac{\beta}{\beta-1}}.
\end{equation}
\end{theorem}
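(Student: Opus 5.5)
The plan is to use the Lagrangian duality behind the derivation of \eqref{eq:mrc} in Appendix~\ref{sec:derive_mgce} and read off the saddle-point (KKT) conditions pointwise in $x$.

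\textbf{Saddle point.} Since $\ell_\beta$ is convex in $\up{h}(x)$ for $\beta\ge 1$, the objective is linear in $\up{p}$, and $\set{U}$ is convex, a minimax theorem applies. Hence there is a saddle point $(\up{h}_\beta,\up{p}_\beta)$. Both reciprocal statements then reduce to checking that the $\up{h}_\beta$ of \eqref{eq:soft_clf_mrc} and the $\up{p}_\beta$ of \eqref{eq:worst_case_probs} are mutual best responses.

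\textbf{Bayes rule for a fixed conditional.} Because $\up{p}(x)=\up{p}^*(x)$ is fixed, the inner minimization over $\up{h}$ for a given $\up{p}$ decouples over $x$:
\begin{equation*}
\min_{\up{h}(x)\in\Delta_k}\ \sum_y \up{p}(y|x)\,\beta\bigl(1-\up{h}(x)_y^{1/\beta}\bigr).
\end{equation*}
This is the maximization of the concave function $\sum_y \up{p}(y|x)\,\up{h}_y^{1/\beta}$ over the simplex. The stationarity condition $\up{p}(y|x)\,\up{h}_y^{1/\beta-1}=c$ gives $\up{h}_y\propto \up{p}(y|x)^{\beta/(\beta-1)}$, which is exactly \eqref{eq:minmax_probs}. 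Inverting the relation gives $\up{p}(y|x)\propto \up{h}_y^{(\beta-1)/\beta}$, which is \eqref{eq:worst_case_probs}. So the two formulas are equivalent expressions of the Bayes-optimality relation. Labels with $\up{h}_y=0$ are handled by the same formula, since then $\up{p}(y|x)=0$ and complementary slackness holds.

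\textbf{Optimality of $\up{p}_\beta$ against $\up{h}_\beta$.} Take $\up{h}_\beta$ from \eqref{eq:soft_clf_mrc} with optimal $\B{\mu}^*$. The goal is to show that the $\up{p}_\beta$ given by \eqref{eq:worst_case_probs} lies in $\set{U}$ and maximizes $\mathbb{E}_{\up{p}}\ell_\beta(\up{h}_\beta,\cdot)$ over $\set{U}$.

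The maximization over $\set{U}$ is a linear program in $\up{p}(\cdot|x)$. Its Lagrangian, with multiplier $\B{\mu}$ for the moment constraints, gives the dual \eqref{eq:mrc}. The first-order optimality of $\B{\mu}^*$ in \eqref{eq:mrc} is
\begin{equation*}
\B{\tau}-\B{\lambda}\odot\partial|\B{\mu}^*|+\mathbb{E}_{\up{p}^*(x)}\nabla_{\B{\mu}}\varphi_\beta(x,\B{\mu}^*)\ni 0 .
\end{equation*}
Implicit differentiation of \eqref{eq:optimality_condition} gives
\begin{equation*}
\nabla_{\B{\mu}}\varphi_\beta=-\sum_y w_y\,\Phi(x,y), \qquad w_y=\frac{\up{h}_\beta(x)_y^{(\beta-1)/\beta}}{\sum_{y'}\up{h}_\beta(x)_{y'}^{(\beta-1)/\beta}},
\end{equation*}
because $\partial_t (t/\beta+1)_+^\beta=(t/\beta+1)_+^{\beta-1}=\up{h}_y^{(\beta-1)/\beta}$.

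Define $\up{p}_\beta(y|x)=w_y$, which is exactly \eqref{eq:worst_case_probs}. The stationarity condition then reads $\B{\tau}-\mathbb{E}_{\up{p}_\beta}\Phi\in\B{\lambda}\odot\partial|\B{\mu}^*|$. This gives feasibility, $|\mathbb{E}_{\up{p}_\beta}\Phi-\B{\tau}|\preceq\B{\lambda}$, together with complementary slackness with $\B{\mu}^*$. These KKT conditions certify that $\up{p}_\beta$ is worst case for $\up{h}_\beta$.

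\textbf{Reciprocal statement.} If $\up{p}_\beta$ is any maximin distribution, the saddle-point property forces $\up{h}_\beta$ to be a best response to $\up{p}_\beta$. By the Bayes-rule step above, $\up{h}_\beta$ is then given by \eqref{eq:minmax_probs}.

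\textbf{Main obstacle.} The hardest step is the duality and KKT bookkeeping: justifying the minimax interchange over the non-compact set of general rules, and handling non-differentiability where $|\B{\mu}|$ has kinks or where the positive part is inactive. For the latter I would work with subgradients and note that inactive labels contribute zero to both $w_y$ and $\up{h}_y$. I would also note that the case $\beta=1$ has to be read as a limit of the formulas.
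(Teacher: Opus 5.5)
Your proof is correct. Its core coincides with the paper's: the per-$x$ KKT solution of the simplex problem yields \eqref{eq:minmax_probs}, and algebraic inversion then gives \eqref{eq:worst_case_probs}.

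That is all the paper does in Appendix~\ref{sec:proof_th1}. It asserts that $\up{h}_\beta$ is a best response to the maximin $\up{p}_\beta$. It never checks that the distribution in \eqref{eq:worst_case_probs} lies in $\set{U}$ or maximizes $\mathbb{E}_{\up{p}}\ell_\beta(\up{h}_\beta,\cdot)$.

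Your third step supplies exactly this missing certificate, via the first-order condition of \eqref{eq:mrc} and the implicit derivative of $\varphi_\beta$. The paper performs that computation only later, in the proof of Theorem~\ref{th:stoc_gradient}, and there it invokes Theorem~1 to identify the weights. You run it in the opposite direction, to \emph{construct} the worst case.

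To close that step, note that the dual of the inner LP for fixed $\up{h}_\beta$ is not \eqref{eq:mrc} itself. It is $\min_{\B{\mu},\nu}\{-\B{\tau}^\top\B{\mu}+\B{\lambda}^\top|\B{\mu}|-\mathbb{E}_{\up{p}^*(x)}\nu_x : \Phi(x,y)^\top\B{\mu}+\nu_x+\beta\le\beta\,\up{h}_\beta(x)_y^{1/\beta}\}$. The pair $(\B{\mu}^*,\varphi_\beta(\cdot,\B{\mu}^*))$ is feasible there with value $\up{V}_\beta$. Its constraints are tight exactly where $\up{h}_\beta(x)_y>0$, i.e.\ on the support of $\up{p}_\beta$. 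Hence $\mathbb{E}_{\up{p}_\beta}\ell_\beta(\up{h}_\beta,\cdot)=\up{V}_\beta$, and weak duality finishes.

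The payoff, which you did not fully exploit, is that steps 2 and 3 already exhibit $(\up{h}_\beta,\up{p}_\beta)$ as a saddle point. The chain $\max_{\up{p}\in\set{U}}\mathbb{E}_{\up{p}}\ell_\beta(\up{h}_\beta,\cdot)=\mathbb{E}_{\up{p}_\beta}\ell_\beta(\up{h}_\beta,\cdot)=\min_{\up{h}}\mathbb{E}_{\up{p}_\beta}\ell_\beta(\up{h},\cdot)\le\max_{\up{p}}\min_{\up{h}}\le\min_{\up{h}}\max_{\up{p}}\le\max_{\up{p}\in\set{U}}\mathbb{E}_{\up{p}}\ell_\beta(\up{h}_\beta,\cdot)$ forces equality throughout. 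So the abstract minimax theorem in your first step is unnecessary, and so is your ``main obstacle'' of interchanging min and max over non-compact rule sets. The reciprocal claim then follows because every maximin $\up{p}$ must have $\up{h}_\beta$ as a best response, and for $\beta>1$ that best response is unique by strict concavity.

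Also, for $\beta>1$ the map $t\mapsto(t)_+^\beta$ is $C^1$, and the derivative of $F$ in its last argument is positive at the root. So $\varphi_\beta$ is differentiable, and only $|\B{\mu}|$ needs subgradients.

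The paper's route is shorter but rests on a saddle-point property it never proves; yours proves it.

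One caveat applies to both arguments: \eqref{eq:worst_case_probs} gives \emph{a} worst-case distribution for $\up{h}_\beta$, not necessarily the only one. For example, if $\B{\lambda}$ is large enough that $\B{\mu}^*=\V{0}$, then $\up{h}_\beta$ is uniform and every $\up{p}\in\set{U}$ is worst case. The first claim should therefore be read existentially.
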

\begin{proof}
    See Appendix~\ref{sec:proof_th1}.
\end{proof}
Similar relationships between probabilities and classification rules have been observed in other works for composite losses \citep{bao2025calm}. Theorem~\ref{th:minimax_worst_case} characterizes the relation between the worst-case distribution $\up{p}_\beta$ and the classifier probabilities $\up{h}_\beta$ corresponding with different values of $\beta$. In particular, the worst-case distribution for a given solution $\B{\mu}^*$ corresponding with the minimax formulation \eqref{eq:mrc} is obtained as
\begin{equation}
\label{eq:worst_case_sol}
\resizebox{0.95\columnwidth}{!}{$\displaystyle
\up{p}_\beta(y|x) = \frac{\bigg(\Big(f(x,\B{\mu}^*)_y + \varphi_\beta(x, \B{\mu}^*)\Big) / \beta + 1\bigg)_+^{\beta-1}}{\sum_{j=1}^k\bigg(\Big(f(x,\B{\mu}^*)_j + \varphi_\beta(x, \B{\mu}^*)\Big)/\beta + 1\bigg)_+^{\beta - 1}}.$}
\end{equation}

The results in Theorem~\ref{th:minimax_worst_case} show how the relationship between worst-case distribution and classification rule changes with the values of $\beta$. Specifically, for $\beta \in (1, \infty)$, the worst-case probability satisfies $\up{p}_\beta(x)_y < \up{h}_\beta(x)_y$ for the high confidence classes $y$, that is, those for which $\up{h}_\beta(x)_y > \frac{1}{|\set{Y}|}$. On the other hand, the worst-case probability satisfies $\up{p}_\beta(x)_y \geq \up{h}_\beta(x)_y$ for the low confidence classes $y$, that is, those for which $\up{h}_\beta(x)_y \leq \frac{1}{|\set{Y}|}$ (see also Figure~\ref{fig:pu_hu_beta} corresponding with 2 classes). This behavior ensures that the minimax formulation penalizes overconfident predictions while preserving uncertainty, thereby performing a form of distributional smoothing on the classifier’s output. For the extreme cases \ac{MAE} $(\beta=1)$ and CE $(\beta=\infty)$, the worst-case distribution shows a different behavior. 
In case of \ac{MAE}, the worst-case distribution uniformly weighs all classes for which the classifier's probability is non-zero, irrespectively of its value (an extremely cautious behavior). However, when $\up{h}_\beta(x)_y=0$ or $\up{h}_\beta(x)_y=1$, the worst-case distribution also agrees with the classifier.
On the other hand, in case of CE, the worst-case distribution coincides with the classification probabilities, that is, $\up{h}_{\up{CE}}(x)_y = \up{p}_{\up{CE}}(x)_y$.

The relationship established above between the classifier probabilities and the worst-case distribution elucidates the optimization dynamics of the proposed MGCE methods. In Section~\ref{sec:optimization} we will show that the stochastic gradients used at learning are given by the worst-case distribution.

The MGCE formulation can also provide an upper bound on the \ac{MAE} classification risk of the classifier as shown in the following section. 

\begin{figure}
    \centering
    \psfrag{12345678912345}[l][l][0.9]{CE}
    \psfrag{12345678912346}[l][l][0.9]{$\beta = 2.0$}
    \psfrag{12345678912347}[l][l][0.9]{$\beta=1.4$}
    \psfrag{12345678912348}[l][l][0.9]{$\beta=1.1$}
    \psfrag{12345678912349}[l][l][0.9]{\ac{MAE}}
    \psfrag{x}[c][t][0.9]{$\up{h}_\beta(x)_y$}
    \psfrag{y}[c][t][0.9]{$\up{p}_\beta(y|x)$}
    \psfrag{loss}[c][t][0.9]{Loss}
    \psfrag{0.2}[r][r][0.8]{0.2}
    \psfrag{0.4}[r][r][0.8]{0.4}
    \psfrag{0.6}[r][r][0.8]{0.6}
    \psfrag{0.8}[r][r][0.8]{0.8}
    \psfrag{1}[r][r][0.8]{1}
    \psfrag{a}[c][c][0.8]{0.2}
    \psfrag{b}[c][c][0.8]{0.4}
    \psfrag{c}[c][c][0.8]{0.6}
    \psfrag{d}[c][c][0.8]{0.8}
    \psfrag{e}[c][c][0.8]{1}
    \psfrag{0}[t][b][0.8]{0}
    \psfrag{2}[r][r][0.8]{2}
    \psfrag{4}[r][r][0.8]{4}
    \psfrag{6}[r][r][0.8]{6}
    \psfrag{8}[r][r][0.8]{8}
    \psfrag{10}[r][r][0.8]{10}
    \includegraphics[width=0.45\textwidth]{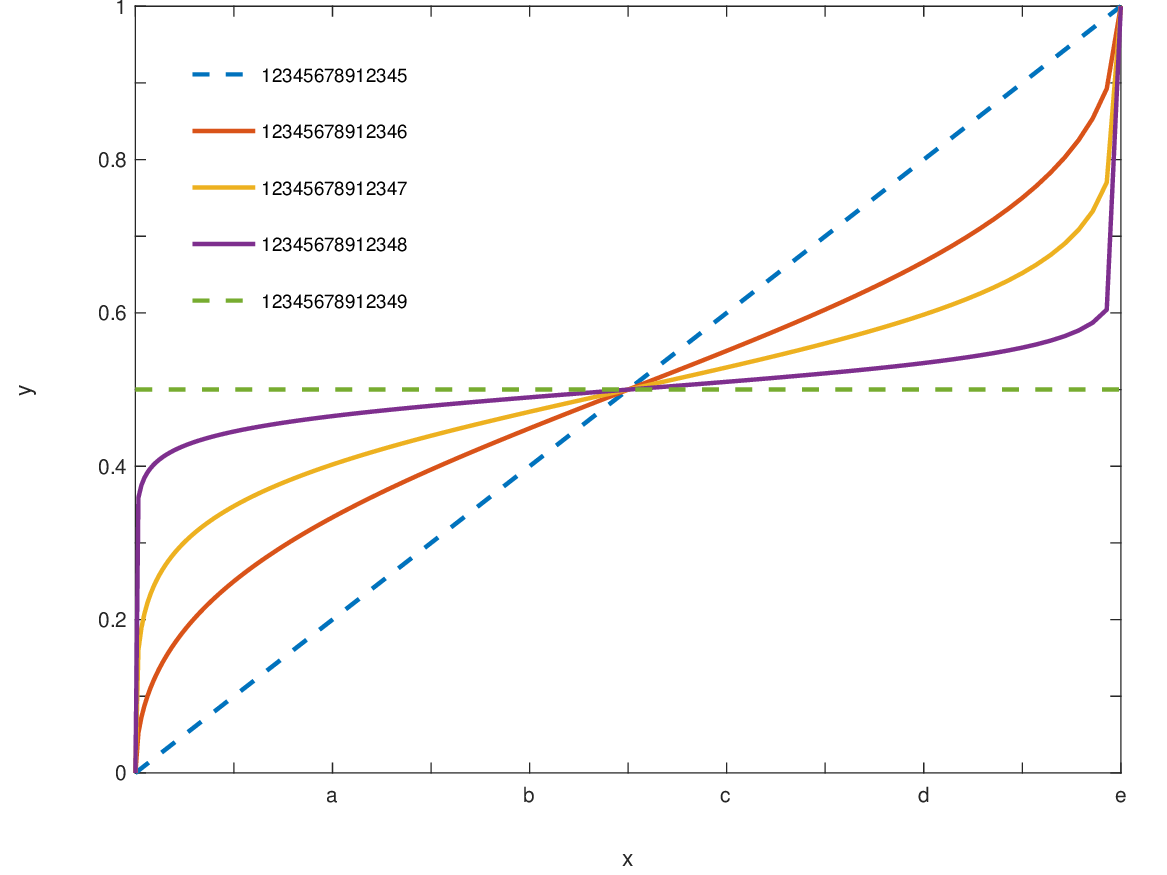}
    % \captionsetup{labelfont={it}, font=small}
    \caption{Relation between the minimax classifier $\up{h}_\beta(x)_y$ and the worst-case probability $\up{p}_\beta(y|x)$ corresponding with 2 classes. For $\beta \in (1,\infty)$, the worst-case probabilities take a cautious stance, avoiding the extremes of \ac{MAE} ($\beta=1$) and CE ($\beta=\infty$) losses. }
    \label{fig:pu_hu_beta}
    \vspace{-0.3cm}
\end{figure}
% \vspace{-0.2cm}
\subsection{Performance guarantees}
\vspace{-0.1cm}
The proposed MGCE formulation can serve as a surrogate for the \ac{MAE} loss. In the following, we characterize the closeness of the MGCE to the \ac{MAE} loss in terms of the loss parameter $\beta$. In addition, we show that our proposed formulation can provide an upper bound on the classification error.

\begin{theorem}
\label{th:tight_bound}
    The probabilistic $\alpha$-losses in \eqref{eq:loss_beta} satisfy
    \begin{equation}
        \label{eq:lb_ub_alpha_loss}
        \frac{1}{\beta}\ell_{\beta}(\up{h}, (x, y)) \leq \ell_{\mathrm{MAE}}(\up{h}, (x, y)) \leq \ell_{\beta}(\up{h}, (x, y)).
    \end{equation}
    Moreover, for $\beta \in (1,\infty)$, we have
    \begin{equation}
        \label{eq:ub_alpha_loss}
        \ell_{\beta}(\up{h}, (x, y)) - \ell_{\mathrm{MAE}}(\up{h}, (x, y)) \leq \beta - 1.
    \end{equation}
\end{theorem}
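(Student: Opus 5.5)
Since both losses depend on $\up{h}$ and $(x,y)$ only through the scalar $u=\up{h}(x)_y\in[0,1]$, the plan is to reduce everything to elementary inequalities in $u$. Writing $\ell_{\mathrm{MAE}}=1-u$ and $\ell_\beta=\beta(1-u^{1/\beta})$, the statement \eqref{eq:lb_ub_alpha_loss} becomes
\[
1-u^{1/\beta}\le 1-u\le \beta\bigl(1-u^{1/\beta}\bigr),\qquad u\in[0,1],\ \beta\ge 1,
\]
and \eqref{eq:ub_alpha_loss} becomes $g(u):=\beta(1-u^{1/\beta})-(1-u)\le \beta-1$.

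The left inequality is immediate. For $u\in[0,1]$ and $1/\beta\le 1$ we have $u^{1/\beta}\ge u$, hence $1-u^{1/\beta}\le 1-u$.

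For the right inequality, set $t=u^{1/\beta}\in[0,1]$, so that $u=t^\beta$. The claim becomes $1-t^\beta\le\beta(1-t)$. I would prove this either via the mean value theorem, using $1-t^\beta=\beta s^{\beta-1}(1-t)$ for some $s\in[t,1]$ together with $s^{\beta-1}\le 1$, or via Bernoulli/convexity: $t\mapsto t^\beta$ is convex, so it lies above its tangent at $t=1$, which gives $t^\beta\ge 1+\beta(t-1)$.

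For \eqref{eq:ub_alpha_loss}, I would study $g$ on $[0,1]$. Its derivative is $g'(u)=1-u^{1/\beta-1}$, which is $\le 0$ because $u^{1/\beta-1}\ge 1$ for $u\in(0,1]$ when $\beta>1$. So $g$ is nonincreasing and attains its maximum at $u=0$, where $g(0)=\beta-1$. This gives the bound, with equality at $u=0$. One should also check continuity of $g$ at $0$, since the derivative blows up there, but $g$ is continuous on $[0,1]$, so the argument holds.

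None of these steps is a real obstacle. The only point needing care is the right-hand inequality of \eqref{eq:lb_ub_alpha_loss}, and the substitution $t=u^{1/\beta}$ turns it into the standard convexity bound.
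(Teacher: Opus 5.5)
Your proof is correct and follows the paper's argument. The left inequality via $u^{1/\beta}\ge u$, and the bound $\beta-1$ via the nonincreasing function $g$ with $g(0)=\beta-1$, are exactly what the paper does; your remark on continuity of $g$ at $0$ adds care the paper omits.

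The only variation is the right-hand inequality in \eqref{eq:lb_ub_alpha_loss}. The paper gets it from the same monotonicity of $g$, as $g(u)\ge g(1)=0$. Your separate substitution $t=u^{1/\beta}$ with the tangent-line/mean-value bound is valid, but it is not needed once you have $g'\le 0$.
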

\begin{proof}
    See Appendix~\ref{sec:proof_theorem2}.
\end{proof}

Theorem~\ref{th:tight_bound} shows that the $\alpha$-losses can provide upper bounds for \ac{MAE} loss. 
Using this relation, we also obtain performance guarantees in terms of an upper bound on the \ac{MAE} classification risk $\mathcal{R}_{\mathrm{MAE}}(\up{h})$ for the minimax classifier $\up{h}_\beta$ derived from the MGCE formulation.

\begin{remark}
\label{rm:tight_bound}
As a consequence of inequality \eqref{eq:lb_ub_alpha_loss}, $\mathcal{R}_{\mathrm{MAE}}({\up{h}_{\beta}}) \leq \mathcal{R}_{\beta}({\up{h}_{\beta}})$ for a minimax classification rule $\up{h}_{\beta}$ corresponding with \eqref{eq:mrc}. Moreover, using inequality \eqref{eq:upper_bound_alpha_risk}, we have that 
\begin{equation}
\label{eq:upper_bound_0_1_risk}
    \mathcal{R}_{\mathrm{MAE}}(\up{h}_{\beta}) \leq \up{V}_{\beta}
\end{equation}
in cases where the underlying distribution $\up{p}^*$ lies in the uncertainty set $\set{U}$ in \eqref{eq:us}.
\end{remark}

The remark above shows that the optimal value $\up{V}_{\beta}$ obtained by the proposed MGCE formulation in \eqref{eq:mrc} can provide an upper bound on the classification error. Notice that the error probability of a classification rule coincides with the \ac{MAE} risk for randomized rules and is at most twice the \ac{MAE} risk for deterministic rules \citep{MazRomGrun:23}. Moreover, inequality \eqref{eq:ub_alpha_loss} in Theorem~\ref{th:tight_bound} shows that the upper bound in \eqref{eq:upper_bound_0_1_risk} is tighter as $\beta$ decreases.

As described above, the proposed MGCE provides performance guarantees together with a convex optimization over margins. However, the bilevel optimization in (8) prevents the usage of conventional methods for large-scale learning. In the following, we present an efficient learning algorithm for MGCE methods based on implicit differentiation.

\vspace{-0.1cm}
\section{OPTIMIZATION}
\vspace{-0.2cm}
\label{sec:optimization}
In this section, we detail the training procedure for the convex optimization problem of MGCE defined in \eqref{eq:mrc}. Such optimization is amenable to \ac{SGD} as the objective is given in terms of expectations. The main technical difficulty lies in the fact that the function $\varphi_\beta(x, \B{\mu})$ in \eqref{eq:implicit_function} is only implicitly defined as in \eqref{eq:optimality_condition}. In the following, we provide an efficient algorithm to compute the stochastic gradients in \eqref{eq:stoc_grad} using implicit differentiation \citep{dontchev2009implicit, agrawal2019differentiable}.

\vspace{-0.1cm}
\subsection{Stochastic gradients}
\vspace{-0.2cm}
A stochastic gradient $g_{\beta}(x,y)$ of the objective in \eqref{eq:mrc} for training sample $(x,y)$ is given by 
\begin{align}
    \label{eq:subgrad_beta}
g_{\beta}(x,y) = \B{\lambda} \odot \text{sign}(\B{\mu})-\Phi(x,y) - \frac{\partial\varphi_\beta(x, \B{\mu})}{\partial \B{\mu}},
\end{align}
where $\odot$ denotes the Hadamard product, sign$(\B{\mu})$ denotes the vector given by the signs of the components of vector $\B{\mu}$. Such gradient is directly derived from the expression $\B{\tau}$ in \eqref{eq:tau}, but requires to evaluate the derivative of function $\varphi_\beta(x,\B{\mu})$ that does not have a closed-form expression.

The following theorem derives the gradient of function $\varphi_\beta(x,\B{\mu})$ using implicit differentiation. Interestingly, the gradient is determined by the worst-case distribution of the classifier corresponding with $\B{\mu}$.
\begin{theorem}
    \label{th:stoc_gradient}
    The stochastic gradient of the function $\varphi(x,\B{\mu})$ corresponding with sample $x$ is given as 
    \begin{equation}
        \label{eq:stoc_grad}
        \frac{\partial\varphi_\beta(x, \B{\mu})}{\partial \B{\mu}} = -\sum_{y=1}^k\up{p}_\beta(y|x)\Phi(x,y),
    \end{equation}
    where $\up{p}_\beta(y|x)$ is the worst-case distribution corresponding to the parameters $\B{\mu}$ and sample $x$ given in \eqref{eq:worst_case_sol}.
\end{theorem}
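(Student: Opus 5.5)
The plan is to apply the implicit function theorem to the defining equation \eqref{eq:optimality_condition}, $F(x,\B{\mu},\nu)=1$ with $\nu=\varphi_\beta(x,\B{\mu})$. For $\beta\in(1,\infty)$, each summand $\big((f(x,\B{\mu})_y+\nu)/\beta+1\big)_+^\beta$ is continuously differentiable in $(\B{\mu},\nu)$: the map $t\mapsto (t)_+^\beta$ has derivative $\beta (t)_+^{\beta-1}$, which is continuous at $t=0$ because $\beta>1$. Write $a_y=\big((f(x,\B{\mu})_y+\nu)/\beta+1\big)_+$. By the chain rule and the linearity of the margins in \eqref{eq:margins}, $\partial f(x,\B{\mu})_y/\partial\B{\mu}=\Phi(x,y)$. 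This gives
\begin{equation*}
\frac{\partial F}{\partial \B{\mu}}=\sum_{y=1}^k a_y^{\beta-1}\Phi(x,y),\qquad \frac{\partial F}{\partial \nu}=\sum_{y=1}^k a_y^{\beta-1}.
\end{equation*}

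Next we check that $\partial F/\partial\nu>0$. Since $F=\sum_y a_y^\beta=1$, at least one $a_y$ is strictly positive, so the sum $\sum_y a_y^{\beta-1}$ is positive. The implicit function theorem therefore applies: $\varphi_\beta(x,\cdot)$ is differentiable, and $\partial\varphi_\beta/\partial\B{\mu}=-(\partial F/\partial\B{\mu})/(\partial F/\partial\nu)$. It is also worth noting that $\varphi_\beta$ really is the unique implicit solution, since $F$ is strictly increasing in $\nu$ wherever $F>0$; this matches the remark made after \eqref{eq:implicit_function}.

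Substituting the two derivatives, the gradient becomes $-\sum_y \big(a_y^{\beta-1}/\sum_j a_j^{\beta-1}\big)\Phi(x,y)$. The weights $a_y^{\beta-1}/\sum_j a_j^{\beta-1}$ are exactly $\up{p}_\beta(y|x)$ in \eqref{eq:worst_case_sol}, evaluated at the current $\B{\mu}$ rather than at $\B{\mu}^*$. Equivalently, since $\up{h}_\beta(x)_y=a_y^\beta$ by \eqref{eq:soft_clf_mrc}, we have $a_y^{\beta-1}=\up{h}_\beta(x)_y^{(\beta-1)/\beta}$, which matches \eqref{eq:worst_case_probs} of Theorem~\ref{th:minimax_worst_case}. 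This yields \eqref{eq:stoc_grad}.

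The main obstacle is regularity at the boundary cases. Smoothness at the kink of $(\cdot)_+$ is fine for $\beta>1$, as argued above. For $\beta=1$ the map $F$ is only piecewise linear, so I would state the result there as a supergradient of the concave function $\varphi_1$: I would take the one-sided derivatives, or pass to the limit $\beta\downarrow1$, giving the uniform weights on the support, consistent with the MAE discussion after Theorem~\ref{th:minimax_worst_case}. For $\beta\to\infty$ the formula reduces to the softmax gradient of log-sum-exp, as shown in Appendix~\ref{sec:softmax_convg}.
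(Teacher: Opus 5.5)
Your proposal is correct and follows the same route as the paper. You differentiate $F(x,\B{\mu},\varphi_\beta(x,\B{\mu}))=1$ implicitly in $\B{\mu}$, solve for $\partial\varphi_\beta/\partial\B{\mu}$, and recognize the weights $a_y^{\beta-1}/\sum_j a_j^{\beta-1}$ as the worst-case distribution of Theorem~\ref{th:minimax_worst_case}. Your added checks — that $(t)_+^\beta$ is $C^1$ for $\beta>1$, that $\partial F/\partial\nu=\sum_y a_y^{\beta-1}>0$, and that the root is unique — supply the implicit-function-theorem hypotheses the paper's proof uses without stating them.
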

\begin{proof}
    See Appendix~\ref{sec:proof_theorem3}.
\end{proof}

Theorem~\ref{th:stoc_gradient} shows that the gradient used by the proposed MGCE formulation varies as a function of the loss parameter $\beta$. In particular, the gradient of $\varphi_{\beta}$ corresponds to an expectation with respect to the worst-case distribution $\up{p}_\beta$.

The results above show that the proposed MGCE formulation is amenable to \ac{SGD}, similarly as methods based on margin losses. Indeed, the proposed approach corresponds with the margin loss defined as
\begin{equation}
    \label{eq:margin_loss}
    \ell_\beta(\B{\mu},(x,y))=-f(x,\B{\mu})_{y}-\varphi_\beta(x,\B{\mu}),
\end{equation}
which is convex on $\B{\mu}$.
Furthermore, the following corollary shows that such MGCE margin loss is classification-calibrated.

\begin{corollary}
\label{cor:bayes_consistency}
Let $\ell_\beta$ be the margin loss defined in \eqref{eq:margin_loss}. For any $x$ and $\beta>1$, if 
\begin{equation}
    \B{\mu}^*\in\arg\min_{\B{\mu}} \sum_y \up{p}^*(y|x) \ell_\beta(\B{\mu},(x,y))
\end{equation}
then
\begin{equation}
    \arg\max_y f(x,\B{\mu}^*)_y=\arg\max_y \up{p}^*(y|x).
\end{equation}
\end{corollary}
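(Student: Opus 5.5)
Since the margin loss $\ell_\beta(\B{\mu},(x,y))=-f(x,\B{\mu})_y-\varphi_\beta(x,\B{\mu})$ is convex in $\B{\mu}$ and, by Theorem~\ref{th:stoc_gradient}, differentiable with gradient $-\Phi(x,y)+\sum_{j}\up{p}_\beta(j|x)\Phi(x,j)$, the plan is to write the first-order optimality condition of the conditional risk and read off the relation between $\up{p}^*(\cdot|x)$ and the worst-case distribution at $\B{\mu}^*$. Differentiating $\sum_y \up{p}^*(y|x)\ell_\beta(\B{\mu},(x,y))$ gives
\begin{equation*}
-\sum_{y=1}^k \up{p}^*(y|x)\Phi(x,y)+\sum_{y=1}^k\up{p}_\beta(y|x)\Phi(x,y)=\V{0},
\end{equation*}
so at a minimizer $\sum_y(\up{p}_\beta(y|x)-\up{p}^*(y|x))\Phi(x,y)=\V{0}$. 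To conclude that $\up{p}_\beta(\cdot|x)=\up{p}^*(\cdot|x)$, I would work (as is standard for pointwise calibration) with the margin vector $\V{f}=f(x,\B{\mu})\in\mathbb{R}^k$ itself as the free variable. This is the case when the feature map is rich enough, e.g.\ the one-hot $\Phi(x,y)=\V{e}_y\otimes\psi(x)$, so that the vectors $\Phi(x,1),\ldots,\Phi(x,k)$ are linearly independent. Then the condition forces $\up{p}_\beta(y|x)=\up{p}^*(y|x)$ for all $y$.

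Next I invoke Theorem~\ref{th:minimax_worst_case} (in the form \eqref{eq:worst_case_sol}). With $\B{\mu}^*$ and $\varphi^*=\varphi_\beta(x,\B{\mu}^*)$, we have $\up{p}^*(y|x)\propto \big((f(x,\B{\mu}^*)_y+\varphi^*)/\beta+1\big)_+^{\beta-1}$. For $\beta>1$ the map $t\mapsto (t/\beta+1)_+^{\beta-1}$ is nondecreasing, and strictly increasing on the region where it is positive. Hence every class with $\up{p}^*(y|x)>0$ has strictly ordered margins in the same order as $\up{p}^*(y|x)$. Classes with $\up{p}^*(y|x)=0$ have margins with $(f_y+\varphi^*)/\beta+1\le 0$, strictly below any class of positive probability. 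Therefore the maximizer of $f(x,\B{\mu}^*)_y$ is the maximizer of $\up{p}^*(y|x)$, with ties corresponding exactly. Note that $\beta>1$ is needed here: at $\beta=1$ the exponent is $0$ and the ordering information is lost, consistent with the MAE discussion after Theorem~\ref{th:minimax_worst_case}.

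The main obstacle is the first step: justifying that the stationarity condition pins down $\up{p}_\beta=\up{p}^*$. This requires differentiability of $\varphi_\beta$, which follows from the implicit function theorem at points where $\partial F/\partial\nu>0$; that holds since at least one term of $F$ is positive. It also requires existence of a minimizer when some $\up{p}^*(y|x)=0$. I would first argue existence directly in margin space. I would show that $\V{f}^*$ can be chosen so that $\big((f^*_y+\varphi^*)/\beta+1\big)_+^{\beta-1}\propto\up{p}^*(y|x)$. Setting $\up{h}_y\propto\up{p}^*(y|x)^{\beta/(\beta-1)}$ as in \eqref{eq:minmax_probs} and inverting the link \eqref{eq:soft_clf_mrc} yields such margins, with zero-probability classes assigned sufficiently negative margins. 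Convexity then shows this point is a global minimizer. For any other minimizer the first-order condition applies, since the objective is differentiable, and the equality argument above goes through.
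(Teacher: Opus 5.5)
Your proposal is correct and follows the paper's proof: the stationarity condition $\sum_y(\up{p}^*(y|x)-\up{p}_\beta(y|x))\Phi(x,y)=\V{0}$, combined with linear independence of the one-hot features, gives $\up{p}_\beta=\up{p}^*$, and monotonicity of the link then gives the argmax equality. The only difference is that you read the ordering off \eqref{eq:worst_case_sol} directly, whereas the paper goes through \eqref{eq:minmax_probs} and uses $\arg\max_y \up{h}_\beta(x)_y=\arg\max_y f(x,\B{\mu}^*)_y$. Your differentiability justification is extra rigor the paper omits; the existence argument is not needed, since the corollary only assumes that $\B{\mu}^*$ is a minimizer.
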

\begin{proof}
    See Appendix~\ref{sec:proof_corollary_1}.
\end{proof}
Beyond classification-calibration, the composite loss corresponding with MGCE leads to a proper loss since MGCE losses are ``calm'' \citep{bao2025calm}. Specifically, a composite loss for probabilities $\tilde\ell_\beta(\up{p}, (x,y))$ can be defined using the margin loss in \eqref{eq:margin_loss} as $\tilde\ell_\beta(\up{p}, (x, y)) = \ell_\beta (\B{\mu}_{\up{p}}, (x, y))$, where $\B{\mu}_{\up{p}}$ is the parameter vector associated with probability $\up{p}$ via \eqref{eq:worst_case_sol}. Then, an argument similar to that in the proof of Corollary~\ref{cor:bayes_consistency} shows that the loss $\tilde{\ell}_\beta$ is proper for any $\beta > 1$.

We can establish a connection between the classifier outputs and the gradients obtained during the \ac{SGD} iterations for $\beta \in [1,\infty)$ using the relation between the worst-case distribution and the classifier outputs in Section~\ref{sec:worst-case-dist}. Relative to the \ac{MAE} loss ($\beta=1$), the gradient of MGCE associated with larger values of $\beta$ places more emphasis on the high confidence classes, i.e., the labels with classifier probability exceeding $\tfrac{1}{|\set{Y}|}$ at the current iteration. Conversely, relative to the CE loss ($\beta=\infty$), the gradient of MGCE for smaller values of $\beta$ places less emphasis on high confidence classes, while allowing for more uncertainty by emphasizing the low confidence classes. As a result, for clean samples, larger $\beta$ values accelerate learning from confident and correctly classified examples. In contrast, for noisy samples, smaller $\beta$ values mitigate the influence of confident but potentially mislabeled examples.

\vspace{-0.1cm}
\subsection{Effective bisection method}
\vspace{-0.2cm}
The worst-case distribution $\up{p}_\beta(y|x)$ is defined in terms of the implicit function $\varphi_\beta(x, \B{\mu})$, so that, the gradient in \eqref{eq:stoc_grad} requires evaluation of $\varphi_\beta(x, \B{\mu})$. For a given $\B{\mu}$ and $x$, this value can be computed by solving the equation $F(x, \B{\mu}, \varphi^i_\beta)=1$ in \eqref{eq:optimality_condition} 
for $\varphi^i_\beta$ using the bisection method as detailed below.

A unique root for equation \eqref{eq:optimality_condition} exists since $F(x, \B{\mu}, \varphi^i_\beta)$ is continuous and monotonically increasing in $\varphi^i_\beta$. In addition, such a root is included in the interval $[\varphi_\beta^{1}, \varphi_\beta^{2}]$ given by 
\begin{equation}
    \begin{array}{cc}
         \varphi_\beta^{1}=&  C_\beta- \max_{i\in\set{Y}} f(x,\B{\mu})_i,\\
         \varphi_\beta^{2}=& C_\beta - \min_{i\in\set{Y}} f(x,\B{\mu})_i,
    \end{array}\label{eq:varphi_limits}
\end{equation}
where $C_\beta = \beta(\tfrac{1}{k^{1/\beta}}-1)$. This interval contains a root because \mbox{$F(x,\B{\mu}, \varphi^{1}_\beta) \leq 1$} and $F(x,\B{\mu}, \varphi^{2}_\beta) \geq 1$. Specifically, these inequalities follow because for each $i \in \set{Y}$, we have 
\begin{equation}
    \nonumber
    \mbox{$\Big(\frac{f(x, \B{\mu})_i + \varphi_\beta^1}{\beta} + 1\Big)^{\beta}\leq \frac{1}{|\set{Y}|}$} 
\end{equation}
and
\begin{equation}
    \nonumber
    \mbox{$\Big(\frac{f(x, \B{\mu})_i + \varphi_\beta^2}{\beta} + 1\Big)^{\beta}\geq \frac{1}{|\set{Y}|}$}.
\end{equation}
Therefore, $\varphi_\beta(x, \B{\mu})$ can be efficiently evaluated by applying the bisection method over such an interval.

% {\linespread{1.3}\selectfont
\begin{algorithm}[t]
\setlength{\textfloatsep}{0pt}
\captionsetup{labelfont={bf}, format=hang}
\caption{Implicit gradient for MGCE \\ via bisection.}
\label{alg:learning_alg}
\renewcommand{\arraystretch}{1.1}
\begin{tabular}{ll}
\textbf{Input:} & \hspace{-0.4cm} Instance $x$, model parameters $\B{\mu}$, \\
& \hspace{-0.4cm} loss parameter $\beta$, tolerance $\epsilon$ \\
\textbf{Output:} & \hspace{-0.4cm} Gradient $\frac{\partial\varphi_\beta(x, \B{\mu})}{\partial \B{\mu}}$ \\
\end{tabular}
\begin{algorithmic}[1]
\setstretch{1.2}
\State Compute $\varphi_{\beta}^1$ and $\varphi_{\beta}^2$ as in \eqref{eq:varphi_limits}
\While{$|\varphi_\beta^{1} - \varphi_\beta^{2}| \ge \epsilon$}
    \State $\varphi_\beta^i \gets (\varphi_\beta^{1} + \varphi_\beta^{2})/2$
    \If{$F(\B{x, \B{\mu}}, \varphi_\beta^i) > 1$}
        \State $\varphi_\beta^{2} \gets \varphi_\beta^i$
    \Else
        \State $\varphi_\beta^{1} \gets \varphi_\beta^i$
    \EndIf
\EndWhile
\State $\varphi_\beta(x,\B{\mu}) \gets \varphi_\beta^{i}$
\State Compute $\up{p}_\beta(y|x)$ using $\varphi_\beta(x,\B{\mu})$ as in \eqref{eq:worst_case_sol}
\State $\frac{\partial\varphi_\beta(x, \B{\mu})}{\partial \B{\mu}} \gets -\sum_{y=1}^k\up{p}_\beta(y|x)\Phi(x,y)$
\end{algorithmic}
\end{algorithm}
% }

Algorithm~\ref{alg:learning_alg} details the gradient computation step for \ac{SGD}, which has a computational complexity of \mbox{$O\Big(\log\Big((\varphi_{\beta}^2 -\varphi_{\beta}^1)/\epsilon\Big)\Big)$} for $\epsilon$ level of precision in the bisection routine. In the following, we evaluate the performance of the proposed MGCE formulation using the efficient learning Algorithm~\ref{alg:learning_alg} on \acp{DNN}.

\begin{figure*}
  \centering
  \begin{subfigure}[b]{0.32\textwidth}
    \centering
    \psfrag{1234567891234567891234}[l][l][0.9]{MGCE}
     \psfrag{1234567891234567891235}[l][l][0.9]{GCE}
     \psfrag{y}[c][c][0.9]{Test accuracy (\%)}
     \psfrag{Epoch}[c][c][0.9]{Epochs}
    \psfrag{loss}[c][t][1]{Loss}
    \psfrag{50}[r][r][0.8]{50}
    \psfrag{0}[r][b][0.8]{0}
    \psfrag{60}[r][r][0.8]{}
    \psfrag{80}[r][r][0.8]{}
    \psfrag{150}[c][b][0.8]{150}
    \psfrag{a}[c][b][0.8]{50}
    \psfrag{70}[r][r][0.8]{70}
    \psfrag{90}[r][r][0.8]{90}
    \psfrag{100}[c][b][0.8]{100}
    \includegraphics[width=\linewidth]{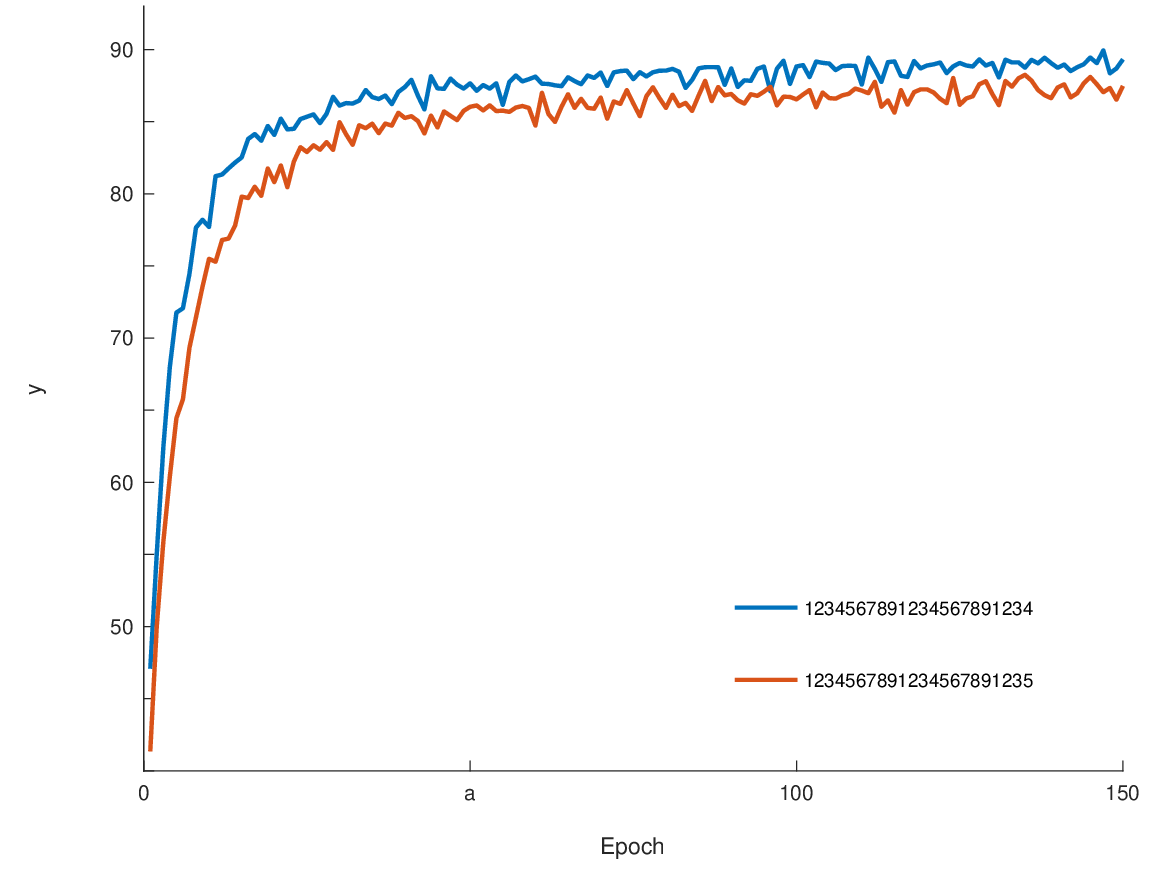} % replace with your file
    \caption{CIFAR10}
    \label{fig:cifar10}
  \end{subfigure}
  \hfill
  \begin{subfigure}[b]{0.32\textwidth}
    \centering
    \psfrag{1234567891234567891234}[l][l][0.9]{MGCE}
     \psfrag{1234567891234567891235}[l][l][0.9]{GCE}
     \psfrag{Epoch}[c][c][0.9]{Epochs}
    \psfrag{loss}[c][t][0.8]{Loss}
    \psfrag{0}[r][b][0.8]{0}
    \psfrag{150}[c][b][0.8]{150}
    \psfrag{10}[r][r][0.8]{}
    \psfrag{30}[r][r][0.8]{}
    \psfrag{100}[c][b][0.8]{100}
    \psfrag{50}[r][t][0.8]{}
    \psfrag{60}[r][r][0.8]{60}
    \psfrag{20}[r][r][0.8]{20}
    \psfrag{40}[r][r][0.8]{40}
    \psfrag{60}[r][r][0.8]{60}
    \psfrag{b}[c][b][0.8]{50}
    \includegraphics[width=\linewidth]{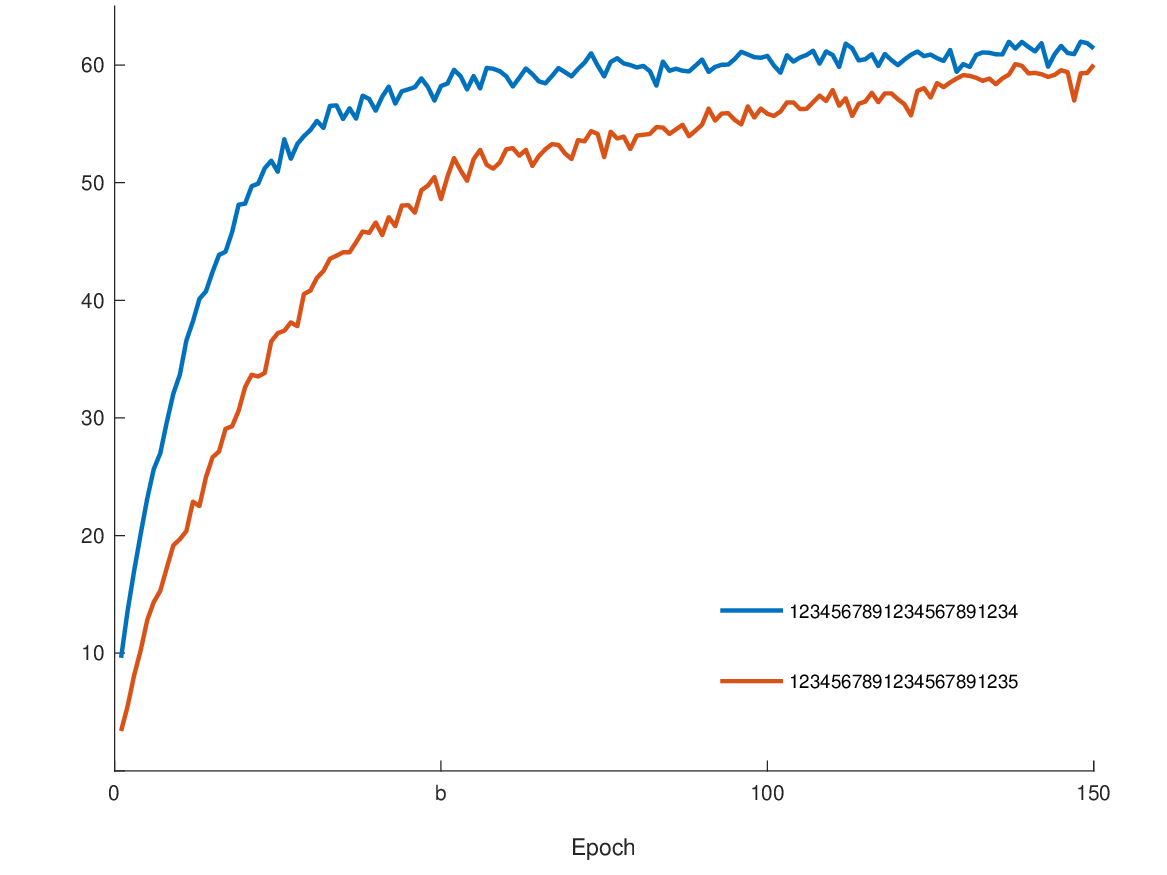} % replace with your file
    \caption{CIFAR100}
    \label{fig:cifar100}
  \end{subfigure}
  \hfill
  \begin{subfigure}[b]{0.32\textwidth}
    \centering
    \psfrag{1234567891234567891234}[l][l][0.9]{MGCE}
     \psfrag{1234567891234567891235}[l][l][0.9]{GCE}
     \psfrag{Epoch}[c][c][0.9]{Epochs}
    \psfrag{loss}[c][t][0.8]{Loss}
    \psfrag{0}[t][b][0.8]{0}
    \psfrag{150}[c][b][0.8]{150}
    \psfrag{100}[c][b][0.8]{100}
    \psfrag{50}[c][b][0.8]{50}
    \psfrag{10}[r][r][0.8]{10}
    \psfrag{20}[r][r][0.8]{20}
    \psfrag{30}[r][r][0.8]{30}
    \psfrag{40}[r][r][0.8]{40}
    \psfrag{60}[r][r][0.8]{60}
    \includegraphics[width=\linewidth]{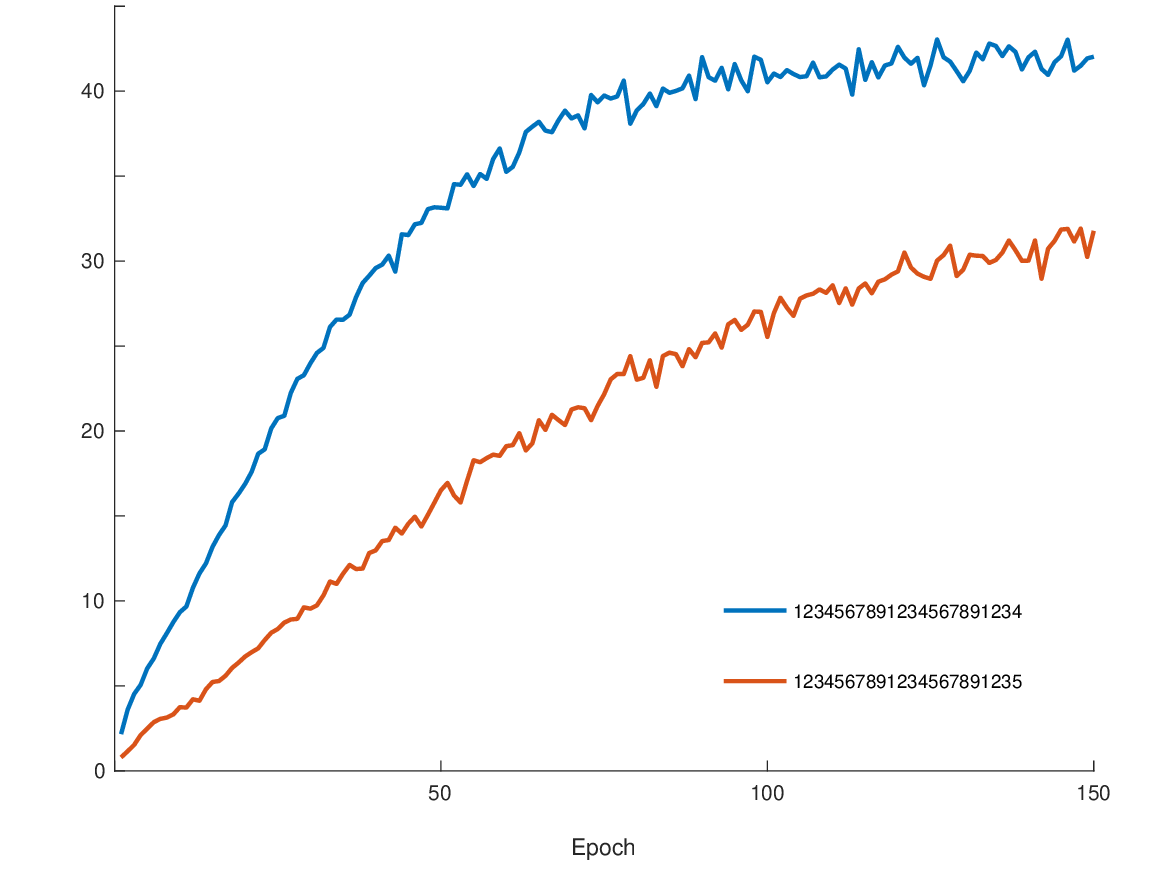} % replace with your file
    \caption{Tiny ImageNet}
    \label{fig:imagenet}
  \end{subfigure}
  \caption{Average test accuracy under clean training data obtained for multiple complex datasets. The value of loss parameter $\beta$ is set to 1.4. The figure shows the fast convergence of the proposed MGCE in comparison to the GCE.}
  \label{fig:convergence}
  \vspace{-0.15cm}
\end{figure*}

%\vspace{-0.1cm}
\section{EXPERIMENTAL RESULTS}
\vspace{-0.2cm}
In this section, we present numerical results evaluating the performance of the 
proposed MGCE formulation in terms of test accuracy, iteration complexity, and 
calibration error using \acp{DNN} corresponding with ResNet architectures \citep{he2016deep}. The experimental results are carried out using the datasets
\mbox{FashionMNIST}, \mbox{CIFAR-10}, SVHN, \mbox{CIFAR-100}, and \mbox{Tiny ImageNet} that are commonly used as benchmarks. In addition, we also assess the performance on the large benchmark dataset WebVision affected by real-world label noise \citep{li2017webvisiondatabasevisuallearning}. For \mbox{CIFAR-10} and \mbox{CIFAR-100}, we employ the standard ResNet-34 architecture, while for \mbox{FashionMNIST}, \mbox{Tiny ImageNet}, and SVHN, we use ResNet-18 architecture. For the WebVision dataset, we use the standard ResNet-50 architecture.

We compare the proposed MGCE method with the GCE method as presented in \cite{zhang2018generalized}. Additionally, we include comparisons with the standard CE loss 
and the convex \ac{MAE} loss based on the minimax approach proposed in \cite{MazRomGrun:23}. 
Additional results are presented in Appendix~\ref{sec:add_results}. The implementation of the proposed MGCE formulation is available in the python library MRCpy~\citep{BonMazPer:24}.\footnote{\url{https://github.com/MachineLearningBCAM/MRCpy/tree/main/MRCpy/pytorch/mgce}}

\vspace{-0.1cm}
\subsection{Experimental setup}
\vspace{-0.2cm}
The following setup applies to all the experiments conducted. As part of data preprocessing and augmentation procedure, we apply per-pixel mean subtraction, horizontal random flipping, and random cropping to 32×32 after padding the images with 4 pixels on each side. For training, we use \ac{SGD} with a momentum of 0.9 and a fixed learning rate of 0.01. We choose the regularization parameter $\lambda_0 = 10^{-5}$ based on grid search among common values as detailed in Appendix~\ref{sec:regularization_param_search}. Algorithm~\ref{alg:learning_alg} with tolerance $\epsilon=10^{-4}$ is used to compute the gradients for the proposed MGCE. For all settings, we clip the gradient norm to 5.0. We train the entire network for 150 epochs using mini-batch of size 128, and we use 10\% of the entire training data for validation. For the experiments corresponding with noisy settings, both training and validation set are contaminated with symmetric label noise, and all the experiments are conducted for five times with randomization. 
\begin{table*}[ht]
\centering
\setlength{\tabcolsep}{3pt} % Adjust horizontal spacing
\renewcommand{\arraystretch}{1.2} % Adjust row height
\resizebox{\textwidth}{!}{\begin{tabular}{c|c|c|c|c|c|c|c}
\hline
\multirow{2}{*}{Dataset} & \multirow{2}{*}{Method}
& \multicolumn3{c|}{Test Accuracy (\%)} 
& \multicolumn3{c}{SCE (\%)} \\
\cline{3-8} 
& 
& $\eta$=0.0 & $\eta$=0.2 & $\eta$=0.4 & $\eta$=0.0 & $\eta$=0.2 & $\eta$=0.4 \\
\hline
        \multirow{4}{*}{FashionMNIST} & MGCE
             & \textbf{92.94 $\pm$ 0.17}
             & 90.11 $\pm$ 0.38
             & 88.65 $\pm$ 0.15
            & \textbf{05.20 $\pm$ 0.23}
            & \textbf{02.44 $\pm$ 0.28}
            & \textbf{02.54 $\pm$ 1.30}
        \\
         & GCE
             & 92.93 $\pm$ 0.34
             & \textbf{91.11 $\pm$ 0.22}
             & \textbf{89.74 $\pm$ 0.29}
            & 05.52 $\pm$ 0.31%\textbf{05.52 $\pm$ 0.31}
            & 07.14 $\pm$ 0.22
            & 08.63 $\pm$ 0.35
        \\
         & CE
             & 92.73 $\pm$ 0.32
             & 89.62 $\pm$ 0.15
             & 87.40 $\pm$ 0.13
            & 05.33 $\pm$ 0.24
            & 17.32 $\pm$ 0.85
            & 33.52 $\pm$ 1.86
        \\
         & \ac{MAE}
             & 91.84 $\pm$ 0.24
             & 89.86 $\pm$ 0.26
             & 88.44 $\pm$ 0.38
            & 73.47 $\pm$ 0.57
            & 74.88 $\pm$ 0.46
            & 74.23 $\pm$ 0.28
        \\
    \hline
        \multirow{4}{*}{CIFAR10} & MGCE
             & \textbf{91.01 $\pm$ 0.43}
             & 86.91 $\pm$ 0.36%\textbf{86.91 $\pm$ 0.36}
             & 83.49 $\pm$ 0.45%\textbf{83.49 $\pm$ 0.45}
            & \textbf{05.53 $\pm$ 0.31}
            & \textbf{10.26 $\pm$ 0.37}
            & \textbf{12.15 $\pm$ 0.5}
        \\
         & GCE
             & 90.55 $\pm$ 0.25
             & \textbf{86.95 $\pm$ 0.43}
             & \textbf{83.80 $\pm$ 0.57}
            & 06.14 $\pm$ 0.13
            & 10.88 $\pm$ 0.48%\textbf{10.88 $\pm$ 0.48}
            & 12.95 $\pm$ 0.66
        \\
         & CE
             & 90.80 $\pm$ 0.27
             & 83.73 $\pm$ 0.37
             & 77.31 $\pm$ 0.93
            & 05.71 $\pm$ 0.44
            & 10.24 $\pm$ 1.86
            & 25.31 $\pm$ 2.40
        \\
         & \ac{MAE}
             & 89.31 $\pm$ 0.22
             & 86.01 $\pm$ 0.46
             & 82.64 $\pm$ 0.61
            & 73.00 $\pm$ 0.24
            & 71.54 $\pm$ 0.46
            & 68.29 $\pm$ 0.59
        \\
        \hline
                \multirow{4}{*}{SVHN} & MGCE
             & \textbf{95.04 $\pm$ 0.21}
             & \textbf{94.03 $\pm$ 0.31}
             & \textbf{92.92 $\pm$ 0.80}
            & 03.19 $\pm$ 0.33%\textbf{03.19 $\pm$ 0.33}
            & \textbf{04.30 $\pm$ 0.24}
            & \textbf{03.96 $\pm$ 0.60}
        \\
         & GCE
             & 94.68 $\pm$ 0.24
             & 93.43 $\pm$ 0.45
             & 92.84 $\pm$ 0.26%\textbf{92.84 $\pm$ 0.26}
            & \textbf{02.58 $\pm$ 0.27}
            & 12.27 $\pm$ 1.75
            & 05.27 $\pm$ 0.27
        \\
         & CE
             & 93.49 $\pm$ 1.23
             & 93.15 $\pm$ 0.25
             & 90.88 $\pm$ 0.94
            & 04.22 $\pm$ 1.57
            & 15.01 $\pm$ 1.07
            & 32.61 $\pm$ 4.63
        \\
         & \ac{MAE}
             & 92.89 $\pm$ 1.45
             & 93.71 $\pm$ 0.41
             & 92.32 $\pm$ 0.41
            & 75.76 $\pm$ 2.83
            & 79.15 $\pm$ 0.32
            & 77.91 $\pm$ 0.36
        \\
    \hline
        \multirow{4}{*}{CIFAR100} & MGCE
             & 64.35 $\pm$ 0.45%\textbf{64.35 $\pm$ 0.45}
             & \textbf{58.88 $\pm$ 0.76}
             & \textbf{52.65 $\pm$ 0.85}
            & \textbf{22.22 $\pm$ 0.09}
            & 24.50 $\pm$ 1.05
            & 23.54 $\pm$ 0.65
        \\
         & GCE
             & \textbf{64.82 $\pm$ 0.38}
             & 56.66 $\pm$ 0.85
             & 51.37 $\pm$ 0.68
            & 22.89 $\pm$ 0.34
            & 33.77 $\pm$ 0.65
            & 37.16 $\pm$ 0.81
        \\
         & CE
             & 64.66 $\pm$ 0.39
             & 52.51 $\pm$ 1.10
             & 42.32 $\pm$ 0.78
            & 22.19 $\pm$ 0.13
            & \textbf{04.30 $\pm$ 1.80}
            & \textbf{06.06 $\pm$ 2.24}
        \\
         & \ac{MAE}
             & 62.17 $\pm$ 0.59
             & 57.57 $\pm$ 0.49
             & 41.99 $\pm$ 1.45
            & 60.47 $\pm$ 0.59
            & 56.15 $\pm$ 0.49
            & 60.46 $\pm$ 0.59
        \\
    \hline
\end{tabular}}
\caption{Average test accuracy and SCE for clean and noisy data. The ratio $\eta$ represents the fraction of samples corrupted by symmetric label noise. We report accuracies of the epoch where validation accuracy is maximum, and the loss parameter $\beta$ is selected using cross-validation. Our proposed MGCE method obtains strong accuracy together with improved calibration.}
\label{tab:loss_comparison_DNN}
\vspace{-0.3cm}
\end{table*}

\begin{figure}  % [!t] tries to place figure at top
  \centering
\psfrag{1234567891234}[l][l][0.9]{MGCE}
     \psfrag{1234567891235}[l][l][0.9]{GCE}
     \psfrag{y}[c][c][0.9]{Accuracy (\%)}
     \psfrag{Epoch}[c][c][0.9]{Epochs}
    \psfrag{loss}[c][t][0.9]{Loss}
    \psfrag{0}[t][b][0.8]{0}
    \psfrag{150}[c][c][0.8]{150}
    \psfrag{250}[c][c][0.8]{250}
    \psfrag{45}[r][r][0.8]{}
    \psfrag{55}[r][r][0.8]{}
    \psfrag{60}[r][r][0.8]{}
    \psfrag{65}[r][r][0.8]{}
    \psfrag{70}[r][r][0.8]{}
    \psfrag{75}[r][r][0.8]{}
    \psfrag{80}[r][r][0.8]{}
    \psfrag{85}[r][r][0.8]{}
    \psfrag{10}[r][r][0.8]{}
    \psfrag{20}[r][r][0.8]{20}
    \psfrag{30}[r][r][0.8]{}
    \psfrag{40}[r][r][0.8]{40}
    \psfrag{50}[c][c][0.8]{50}
    \psfrag{60}[r][r][0.8]{60}
    % \psfrag{a}[r][r][0.8]{0}
    % \psfrag{b}[c][c][0.8]{50}
    % \psfrag{c}[c][r][0.8]{100}
    % \psfrag{d}[c][r][0.8]{150}
  % ========== Single figure spanning one column ==========
  \includegraphics[width=\columnwidth]{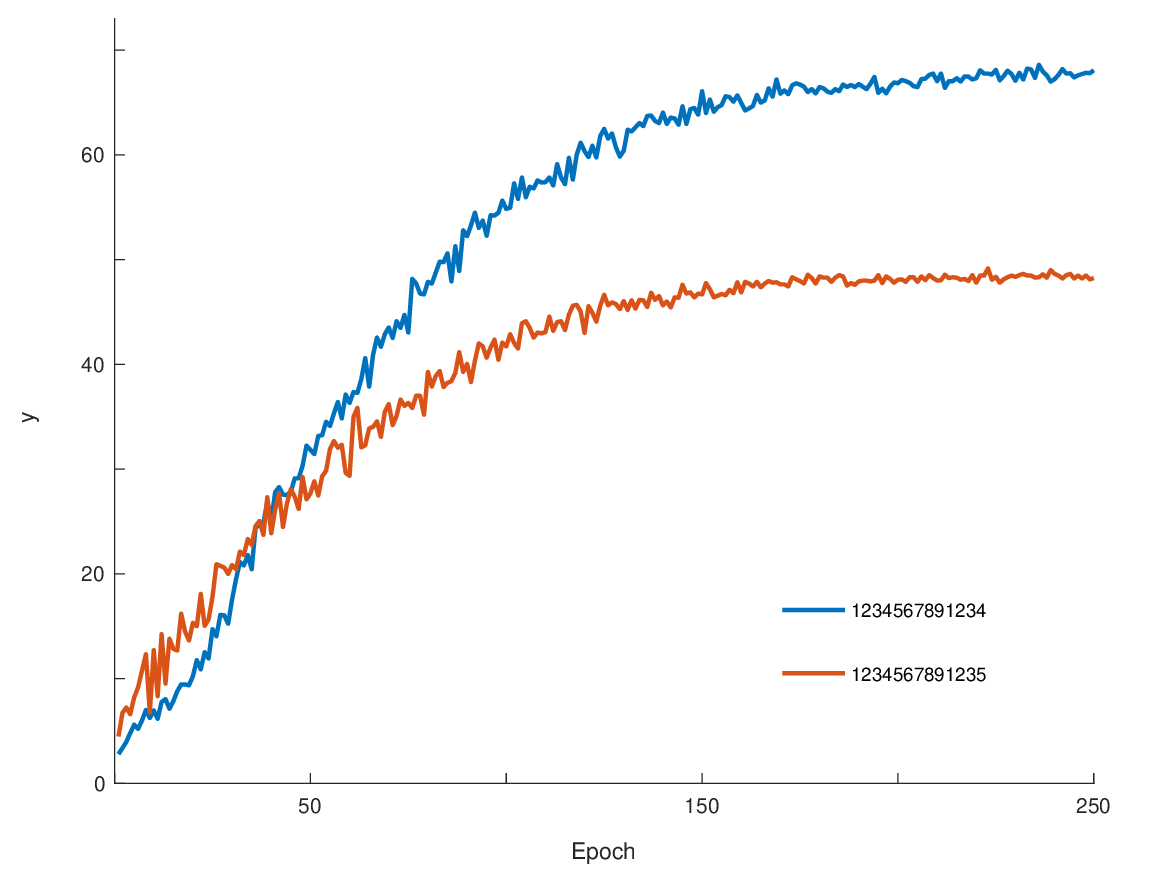}  % adjust filename
  \caption{Top-1 validation accuracy on the real-world noisy dataset WebVision. 
The figure shows that the proposed MGCE outperforms GCE, which significantly underfits on this complex dataset due to its non-convexity.}
\vspace{-0.5cm}
  \label{fig:webvision}
\end{figure}

\vspace{-0.6cm}
\subsection{Comparison between GCE and MGCE in terms of accuracy and convergence}
\vspace{-0.2cm}

We compare the proposed MGCE method with GCE using DNNs on multiple datasets CIFAR10 (10 classes), CIFAR100 (100 classes), and Tiny ImageNet (200 classes). Figure~\ref{fig:convergence} shows the test accuracy across training epochs for $\beta = 1.4$, which is the default value suggested in \cite{zhang2018generalized} for the GCE method.

The figure illustrates that the proposed MGCE converges to a better accuracy faster than the GCE method. 
Specifically, the difference is significant for complex datasets with many classes. For instance, in \mbox{CIFAR-100}, the MGCE method achieves the same accuracy as GCE at epoch 50 for a significantly lower number of epochs.

\vspace{-0.2cm}
\subsection{Evaluation on real-world noisy label}
\vspace{-0.1cm}
We further assess the performance of the proposed MGCE formulation using the real-world noisy dataset WebVision \citep{li2017webvisiondatabasevisuallearning} that contains more than 2.4 million web images crawled from the internet. Here, we follow the "Mini" setting proposed in \cite{jiang2018mentornet} that only takes the first 50 classes of the Google resized image subset as the training dataset. A \mbox{ResNet-50} architecture is trained under different methods and evaluated on the corresponding clean WebVision validation set.
% The trained network is evaluated on the same 50 classes of the WebVision validation set which is considered to be clean. 
We use the same experimental setup as in \cite{ma2020normalized,zhou2021asymmetric}, which is detailed in the Appendix~\ref{sec:real_world_setup}. 

Figure~\ref{fig:webvision} reports the top-1 validation accuracy on the clean validation set.
The results show that the proposed MGCE method significantly outperforms GCE using the same value $\beta=1.4$. Specifically, results demonstrate that the GCE method may underfit with such complex dataset inherent with real-world noise. 

\vspace{-0.2cm}
\subsection{Cross-validation over general values of \texorpdfstring{$\beta$}{beta}}
\vspace{-0.1cm}
We further evaluate the performance of MGCE and GCE across general values of the parameter $\beta$. In particular, we compare with the baselines provided by CE and the convex \ac{MAE} loss introduced in \cite{MazRomGrun:23}. We assess the performance in terms of the test accuracy and model calibration under both clean and noisy label conditions. For the noisy setting, we consider symmetric noise with rate $\eta=0.2$ and $\eta=0.4$ introduced synthetically to both the training and validation sets. We consider static calibration error (SCE) that takes into account the probabilities corresponding with all the classes rather just the maximum prediction as defined in \cite{nixon2019measuring}. The loss parameter $\beta$ is selected via cross-validation over 8 values in the interval $(1.05, 11)$. This range is sufficient in practice, as values below $1.05$ yield results similar to the \ac{MAE} loss, while values above 11 yield results similar to CE. 

The results in Table~\ref{tab:loss_comparison_DNN} show that the proposed MGCE method attains strong accuracy while yielding improved calibration, particularly in the presence of label noise. Without noise, MGCE and GCE obtain similar accuracy as that with CE. On the other hand, in noisy scenarios, both methods demonstrate an increased robustness surpassing the \ac{MAE} and CE baselines, especially on complex datasets such as CIFAR100.
Beyond accuracy, MGCE results in an enhanced model calibration, especially in noisy scenarios. Although CE achieves better calibration on CIFAR-100, it does so at the expense of accuracy.

\vspace{-0.1cm}
\section{CONCLUSION}
\vspace{-0.3cm}
In this paper, we introduce a minimax formulation for generalized cross-entropy (MGCE). While existing formulations of generalized cross-entropy lead to non-convex optimization over the classifier margins, our formulation casts learning as a convex bilevel optimization problem. Specifically, such formulation minimizes the worst-case expected \mbox{$\alpha$-loss} with respect to an uncertainty set of distributions. We show that the MGCE formulation can provide an upper bound on the classification error, and leads to classification-calibrated margin losses. Moreover, the paper establishes a relationship between the MGCE classifier and the worst-case distribution, which provides insights into the optimization dynamics.

Using implicit differentiation, we show that the stochastic gradients for MGCE are determined by worst-case distributions, and can be efficiently computed by means of a bisection method.
Experiments on benchmark datasets with deep neural networks demonstrate that MGCE improves accuracy and convergence over existing methods while producing better calibrated models, especially under label noise. Overall, the presented results show that the methodology proposed can enable to generalize cross-entropy losses in a more effective manner than existing approaches.

%\vspace{-0.1cm}
\subsubsection*{Acknowledgements}
\vspace{-0.1cm}
Funding in direct support of this work has been provided by
projects PID2022-137063NBI00, PLEC2024-011247, and CEX2021-001142-S funded
by MCIN/AEI/10.13039/501100011033 and the European Union
“NextGenerationEU”/PRTR, and \mbox{programs} BERC-2022-2025 and ELKARTEK funded by the
Basque Government. Anqi Liu is also supported by a grant from Open Philanthropy. Kartheek
Bondugula also holds a predoctoral grant (EJ-GV 2022) from the Basque Government and developed part of this research during a research stay at Johns Hopkins University.

% \balance
%\vspace{-0.2cm}
\bibliography{ref}
 \section*{Checklist}
 \begin{enumerate}

   \item For all models and algorithms presented, check if you include:
   \begin{enumerate}
     \item A clear description of the mathematical setting, assumptions, algorithm, and/or model. [Yes]
     \item An analysis of the properties and complexity (time, space, sample size) of any algorithm. [Yes]
     \item (Optional) Anonymized source code, with specification of all dependencies, including external libraries. [Yes]
   \end{enumerate}

   \item For any theoretical claim, check if you include:
   \begin{enumerate}
     \item Statements of the full set of assumptions of all theoretical results. [Yes]
     \item Complete proofs of all theoretical results. [Yes]
     \item Clear explanations of any assumptions. [Yes]     
   \end{enumerate}

   \item For all figures and tables that present empirical results, check if you include:
   \begin{enumerate}
     \item The code, data, and instructions needed to reproduce the main experimental results (either in the supplemental material or as a URL). [Yes]
     \item All the training details (e.g., data splits, hyperparameters, how they were chosen). [Yes]
     \item A clear definition of the specific measure or statistics and error bars (e.g., with respect to the random seed after running experiments multiple times). [Yes]
     \item A description of the computing infrastructure used. (e.g., type of GPUs, internal cluster, or cloud provider). [No]
   \end{enumerate}

   \item If you are using existing assets (e.g., code, data, models) or curating/releasing new assets, check if you include:
   \begin{enumerate}
     \item Citations of the creator If your work uses existing assets. [Yes]
     \item The license information of the assets, if applicable. [Not Applicable]
     \item New assets either in the supplemental material or as a URL, if applicable. [Not Applicable]
     \item Information about consent from data providers/curators. [Not Applicable]
     \item Discussion of sensible content if applicable, e.g., personally identifiable information or offensive content. [Not Applicable]
   \end{enumerate}

   \item If you used crowdsourcing or conducted research with human subjects, check if you include:
   \begin{enumerate}
     \item The full text of instructions given to participants and screenshots. [Not Applicable]
     \item Descriptions of potential participant risks, with links to Institutional Review Board (IRB) approvals if applicable. [Not Applicable]
     \item The estimated hourly wage paid to participants and the total amount spent on participant compensation. [Not Applicable]
   \end{enumerate}

 \end{enumerate}

\appendix

\input{supplement}
\end{document}

%% file: supplement.tex
\acrodef{RRM}[RRM]{robust risk minimization}
\acrodef{ERM}[ERM]{empirical risk minimization}
\acrodef{MRC}[MRC]{minimax risk classifier}
\acrodef{LR}[LR]{logistic regression}
\acrodef{SOA}[SOA]{state-of-the-art}
\acrodef{RFF}[RFF]{random Fourier features}
\acrodef{DNN}[DNN]{deep neural network}
\acrodef{SGD}[SGD]{stochastic gradient descent}
\acrodef{CE}[CE]{cross entropy}
\acrodef{MAE}[MAE]{mean absolute error}
\acrodef{GCE}[GCE]{generalized cross entropy}
\acrodef{MLP}[MLP]{multi-layer perceptron}
\acrodef{ECE}[ECE]{expected calibration error}
% If your paper is accepted, change the options for the package
% aistats2026 as follows:
%
%\usepackage[accepted]{aistats2026}
%
% This option will print headings for the title of your paper and
% headings for the authors names, plus a copyright note at the end of
% the first column of the first page.

% If you set papersize explicitly, activate the following three lines:
%\special{papersize = 8.5in, 11in}
%\setlength{\pdfpageheight}{11in}
%\setlength{\pdfpagewidth}{8.5in}

% If you use natbib package, activate the following three lines:
%\usepackage[round]{natbib}
%\renewcommand{\bibname}{References}
%\renewcommand{\bibsection}{\subsubsection*{\bibname}}

% If you use BibTeX in apalike style, activate the following line:
%\bibliographystyle{apalike}

% \begin{document}

% If your paper is accepted and the title of your paper is very long,
% the style will print as headings an error message. Use the following
% command to supply a shorter title of your paper so that it can be
% used as headings.
%
% \runningtitle{I use this title instead because the last one was very long}

% If your paper is accepted and the number of authors is large, the
% style will print as headings an error message. Use the following
% command to supply a shorter version of the authors names so that
% they can be used as headings (for example, use only the surnames)
%
%\runningauthor{Surname 1, Surname 2, Surname 3, ...., Surname n}

% Supplementary material: To improve readability, you must use a single-column format for the supplementary material.
\onecolumn
% \aistatstitle{Minimax Generalized Cross-Entropy\\(Supplementary Material)}

% \setcounter{equation}{20}
% \setcounter{table}{1}
% \setcounter{figure}{4}

\section{DERIVATIONS FOR THE CONVEX FORMULATION OF MGCE IN (11)}
\label{sec:derive_mgce}

The minimax risk optimization problem (7) corresponding with the $\alpha$-loss
is given as follows
\begin{equation}
\begin{array}{ccc}
      \up{V}_\beta = \underset{\up{h}}{\min} \ \underset{\up{p} \in \set{U}}{\max} \ \mathbb{E}_{\up{p}}\ell_{\beta}(\up{h}, (x,y)) =& \underset{\V{h}}{\min} \ \underset{\V{p}}{\max} &\ \beta\big(1 - {({{\V{h}}}^{1/\beta})}^\top \V{p}\big) - I_+(\V{p}) \\
     &\text{s.t.} & \sum_{y \in \set{Y}}\up{p}(x,y) = \up{p}^*(x) \ \forall \ x \in \set{X}, \\
     && \B{\tau} - \B{\lambda} \preceq \B{\Phi}^\top \V{p} \preceq \B{\tau} + \B{\lambda},
\end{array}\nonumber
\end{equation}
where each row of the vectors $\V{h} \in \mathbb{R}^{|\set{X}||\set{Y}|}$ and $\V{p} \in \mathbb{R}^{|\set{X}||\set{Y}|}$ represents a classification probability $\up{h}(x)_y$ and a probability $\up{p}(x,y)$ for distribution $\up{p} \in \set{U}$, corresponding with a sample $x \in \set{X}$ and label $y \in \set{Y}$. Each row of the matrix $\B{\Phi} \in \mathbb{R}^{|\set{X}||\set{Y}| \times m}$ represents the feature mapping vector $\Phi(x,y) \in \mathbb{R}^{m}$. Moreover, 
\begin{equation}
    I_{+}(\mathbf{p}) = 
\begin{cases}
0 & \text{if } \mathbf{p} \succeq 0 \\
\infty & \text{otherwise}.
\end{cases}\nonumber
\end{equation}

Taking the Lagrange dual of the maximization over $\V{p}$, we obtain
\begin{equation}
    \begin{array}{ccc}
         \underset{\V{h}, \B{\mu}_1,\B{\mu}_2, \B{\nu}}{\min} & \beta - (\B{\tau} - \B{\lambda})^\top\B{\mu}_1 + (\B{\tau} + \B{\lambda})^\top\B{\mu}_2 - \mathbb{E}_{\up{p}^*}\nu_x + f^*\big(\B{\Phi}^\top(\B{\mu}_1 - \B{\mu}_2) + \B{\bar\nu}\big) & \\
         s.t. & \B{\mu}_1, \B{\mu}_2 \succeq 0,
    \end{array}\nonumber
\end{equation}
where each row of the vector $\B{\nu} \in \mathbb{R}^{|\set{X}|}$ corresponds to an $x \in \set{X}$ denoted by $\nu_x$, while each row of the vector $\B{\bar\nu}\in\mathbb{R}^{|\set{X}||\set{Y}|}$ corresponds to a pair $(x, y)$ such that the value for an $x$ and any $y \in \set{Y}$ is equal to $\nu_x$. The function $f^*$ is the conjugate of $f(\V{p}) = \beta {(\V{h}^{1/\beta})}^\top \V{p} + I_{+}(\V{p})$ given by
\begin{equation}
    f^*(\V{w}) = \underset{\V{p}\succeq0}{\sup} \big(\V{w} - \beta {\V{h}^{1/\beta}}\big)^\top \V{p} = \begin{cases} 
      0 & \text{if} \ \V{w} \preceq \beta {\V{h}^{1/\beta}} \\
      \infty & \text{otherwise}.
   \end{cases}\nonumber
\end{equation}
Then, taking $\B{\mu} =\B{\mu}_1 - \B{\mu}_2$ and simplifying, we have
\begin{equation}
\label{eq:semi_final}
    \begin{array}{ccc}
       \underset{\V{h}, \B{\mu}, \B{\nu}}{\min} &  - \B{\tau}^\top\B{\mu} + \B{\lambda}^\top|\B{\mu}| - \mathbb{E}_{\up{p}^*}\nu_{x} & \\
       \text{s.t.}  & \Phi(x,y)^\top\B{\mu} + \nu_x + \beta \leq \beta \up{h}(x)_y^{1/\beta} & \forall y \in \set{Y}, x \in \set{X}.
    \end{array}\nonumber
\end{equation}
Since $\up{h}(x)_y$ represents a conditional probability of $y \in \set{Y}$ for a given $x \in \set{X}$, we have $\sum_y \up{h}(x)_y = 1$. By incorporating this condition into the constraints of the previous minimization problem, we obtain
\begin{equation}
\label{eq:final1}
    \begin{array}{ccc}
       \underset{\B{\mu}, \B{\nu}}{\min} &  - \B{\tau}^\top\B{\mu} + \B{\lambda}^\top|\B{\mu}| - \mathbb{E}_{\up{p}^*}\nu_{x} & \\
       \text{s.t.}  & \sum_{y \in \set{Y}} \Big(\frac{\Phi(x,y)^\top\B{\mu} + \nu_x}{\beta} + 1\Big)_+^\beta \leq 1, \ \ x \in \set{X}.
    \end{array}\nonumber
\end{equation}
Because the variable $\nu_x$ is implicitly defined by $\B{\mu}$, the optimization becomes equivalent to 
\begin{equation}
\label{eq:final2}
    \begin{array}{ccc}
       \underset{\B{\mu}}{\min} &  - \B{\tau}^\top\B{\mu} + \B{\lambda}^\top|\B{\mu}| - \mathbb{E}_{\up{p}^*}\{\varphi_\beta(x, \B{\mu})\}, %& \\
    \end{array}\nonumber
\end{equation}
where 
\begin{equation}
    \begin{array}{cl}
         \varphi_\beta(x, \B{\mu}) =& \underset{\nu}{\max} \ \nu  \\
         & \text{s.t.} \  \underset{y \in \set{Y}}{\sum}\Big(\frac{f(x, \B{\mu})_y + \nu}{\beta} + 1\Big)^{\beta}_{+} \leq 1.
    \end{array}\nonumber
\end{equation}
This corresponds to the optimization problem defined by (11) in the main paper.

\section{CONVERGENCE OF THE LINK FUNCTION IN (14) TO SOFTMAX FUNCTION FOR \texorpdfstring{$\beta \to \infty$}{beta to infinity}}
\label{sec:softmax_convg}
We begin by applying the limit $\underset{{\beta \to \infty}}{\lim} (1+\frac{a}{\beta})^\beta = e^a$ to the link function in (14), which gives
\begin{equation}
    \label{eq:hy_x_infty}
    \lim_{\beta \to \infty} \up{h}_\beta(x)_y = \lim_{\beta \to \infty} \Big(\frac{f(x, \B{\mu}^*)_y + \varphi_\beta(x, \B{\mu}^*)}{\beta} + 1\Big)^\beta_{+} = \mathrm{e}^{f(x,\B{\mu}^*)_y + \varphi_\infty(x, \B{\mu}^*)}.
\end{equation}
Using the result in \eqref{eq:hy_x_infty}, taking the the limit $\beta \to \infty$ of the implicit function defined in (13), we obtain
\begin{equation}
    \sum_{y \in \set{Y}}\mathrm{e}^{f(x,\B{\mu}^*)_y + \varphi_\infty(x, \B{\mu}^*)} = 1.\nonumber
\end{equation}
Solving for $\varphi_\infty$ gives the log-sum-exp normalization
\begin{equation}
    \label{eq:varphi_infty}
    \varphi_\infty(x, \B{\mu}^*) = -\log\{\sum_{y \in \set{Y}}\mathrm{e}^{f(x,\B{\mu}^*)_y}\}.
\end{equation}
Finally, substituting \eqref{eq:varphi_infty} into~\eqref{eq:hy_x_infty}, we obtain 
\begin{equation}
    \lim_{\beta \to \infty} \up{h}_\beta(x)_y = \frac{\mathrm{e}^{f(x,\B{\mu}^*)_y}}{\sum_{y \in \set{Y}}\mathrm{e}^{f(x,\B{\mu}^*)_y}}.\nonumber
\end{equation}
This corresponds to the softmax function over the classification margins $f(x, \B{\mu}^*)$.

\section{PROOF OF THEOREM 1}
\label{sec:proof_th1}
% The first result (12) is obtained by solving the entropy corresponding with $\V{p}^u \in \set{U}$. 
% On the other hand, the second result (11) is directly obtained using the relationship of the first result, and the constraint that $\V{p}^u$ is a probability distribution. In the following, we prove the first result.
We first prove the relation (16) and then establish (15) using it. For $\beta \geq 1$, and given the worst-case distribution $\up{p}_\beta \in \arg \underset{\up{p}\in\set{U}}{\max}\ \underset{\up{h}}{\min} \ \ell_\beta (\up{h}, \up{p})$, the minimax classifier $\up{h}_\beta$ is defined as a solution to the following optimization problem
\begin{align*}
    \min_{\up{h}}\ell_\beta(\up{h}, \up{p}_\beta) = \min_{\up{h}} \sum_{x,y}\up{p}_\beta(x,y)\beta(1 - \up{h}(x)_y^{\frac{1}{\beta}}) = \beta + \beta\sum_{x}\up{p}_\beta(x)\min_{\up{h}} \sum_{y}\{ - \up{p}_\beta(y|x)\up{h}(x)_y^{\frac{1}{\beta}}\}.\nonumber
\end{align*}

Therefore, this minimization problem can be solved independently for each instance $x \in \set{X}$ as
\begin{equation}
    \label{eq:h_x}
    \begin{array}{cc}
        \underset{{\V{h}}}{\min} & -\V{p}_\beta^\top\V{h}^{\frac{1}{\beta}} \\
        \text{s.t.} & \V{h} \succeq 0, \ \V{1}^\top\V{h} = 1, \\
    \end{array}
\end{equation}
where each row of the vectors $\V{h} \in \mathbb{R}^k$ and $\V{p}_\beta \in \mathbb{R}^k$ represents $\up{h}(x)_y$ and $\up{p}_\beta(y|x)$, respectively, and $\B{1}$ denotes a vector of ones.

The optimization problem in~\eqref{eq:h_x} is convex and we can utilize the Karush-Kuhn-Tucker (KKT) optimality conditions to solve it. The KKT conditions corresponding with the primal solution $\V{h}_\beta$ and the dual solution $\B{\lambda}^*, \nu^*$ are given by
\begin{align}
    \label{eq:feasibility}
    \begin{array}{cc}
        \V{h}_\beta \succeq 0, \ \B{1}^\top\V{h}_\beta = 1, \
    \B{\lambda}^* \succeq 0 & \text{(feasibility)}, \\
        \lambda^*_i{\up{h}_\beta}_i = 0 & \text{(complementary slackness)}, \\
         -\frac{1}{\beta}{\up{p}_\beta}_i{{\up{h}_\beta}_i}^{\frac{1-\beta}{\beta}} - \lambda^*_i + \nu^* = 0 \ \forall \ i \in \set{Y} & \text{(stationary condition)}, \\
    \end{array}
\end{align}
where ${\up{p}_\beta}_i$ and ${\up{h}_\beta}_i$ represent ${\up{p}_\beta}(i|x)$ and $\up{h}_\beta(x)_i$, respectively, for $i \in \set{Y}$. 

From the stationary condition in~\eqref{eq:feasibility}, we obtain
\begin{align*}
    \lambda_i^* = -\frac{1}{\beta}{\up{p}_\beta}_i{\up{h}_\beta}_i^\frac{1-\beta}{\beta} + \nu^* \ \forall \ i.
\end{align*}
Multiplying with ${\up{h}_\beta}_i$ on both sides and using the complementary slackness in \eqref{eq:feasibility}, we obtain
\begin{align*}
    0 = \Big(-\frac{1}{\beta}{\up{p}_\beta}_i{\up{h}_\beta}_i^\frac{1-\beta}{\beta} + \nu^*\Big){\up{h}_\beta}_i\nonumber
\end{align*}
\begin{equation}
\label{eq:nu_star}
    \implies \nu^* = \frac{1}{\beta}{\up{p}_\beta}_i{\up{h}_\beta}_i^{\frac{1-\beta}{\beta}} \implies {\up{h}_\beta}_i = \Big(\frac{{\up{p}_\beta}_i}{\nu^*\beta}\Big)^\frac{\beta}{\beta-1}.
\end{equation}
Since the probabilities ${\up{h}_\beta}$ satisfy $\sum_{y}{\up{h}_\beta}_i = 1$, we have
\begin{align*}
    \sum_{i \in \set{Y}} \Big({\frac{\nu^*\beta}{{\up{p}_\beta}_i}}\Big)^\frac{\beta}{1-\beta} = 1 \implies {\nu^*}^{\frac{\beta}{1-\beta}}=\frac{1}{\beta^{\frac{\beta}{1-\beta}}\sum_{i}{{\up{p}_\beta}_i}^{\frac{\beta}{\beta-1}}}.\nonumber
\end{align*}
Finally, substituting the above expression $\nu^*$ into~\eqref{eq:nu_star}, we obtain, for each $x \in \set{X}$, 
\begin{align}
    \label{eq:minimax_worst_case}
    {\up{h}_\beta}_i = \frac{{{\up{p}_\beta}_i}^{\frac{\beta}{\beta-1}}}{\sum_{i}{{{\up{p}_\beta}_i}}^{\frac{\beta}{\beta-1}}} \implies {\up{h}_\beta}(x)_y = \frac{{{\up{p}_\beta}(y|x)}^{\frac{\beta}{\beta-1}}}{\sum_{y}{{{\up{p}_\beta}(y|x)}}^{\frac{\beta}{\beta-1}}},
\end{align}
which corresponds to~(16) in the main paper.

In addition, as a consequence of \eqref{eq:minimax_worst_case}, we obtain relation (15) as follows. 
Using the fact that \mbox{$\sum_{y \in \set{Y}}\up{p}_\beta(y|x) = 1$}, and
\begin{align}
    \up{p}_\beta(y|x) = \Big(\up{h}_\beta(x)_y \sum_{y}\up{p}_\beta(y|x)^\frac{\beta}{\beta-1}\Big)^\frac{\beta-1}{\beta}\nonumber
\end{align}
from \eqref{eq:minimax_worst_case}, we have
\begin{align}
    \label{eq:intermediate_step}
    \sum_y\up{h}_\beta(x)_y^\frac{\beta-1}{\beta} = \frac{1}{\big(\sum_{y}\up{p}_\beta(y|x)^\frac{\beta}{\beta-1}\big)^\frac{\beta-1}{\beta}}.
\end{align}
Then, combining the relations in \eqref{eq:minimax_worst_case} and \eqref{eq:intermediate_step}, we achieve
\begin{align}
    \up{p}_\beta(y|x) = \frac{\up{h}_\beta(x)_y^\frac{\beta-1}{\beta}}{\sum_{y}\up{h}_\beta(x)_y^\frac{\beta-1}{\beta}}.\nonumber
\end{align}

\section{PROOF OF THEOREM 2}
\label{sec:proof_theorem2}
Since $\up{h}(x)_y$ is a probability, we have that $0 \leq \up{h}(x)_y \leq 1$ and $\sum_{y \in \set{Y}}\up{h}(x)_y = 1$. Therefore, for $\beta \geq 1$, the first inequality in (18) is achieved as follows
\begin{align*}
    \nonumber
    \up{h}(x)_y \leq \up{h}(x)_y^{\frac{1}{\beta}} \iff 1 - \up{h}(x)_y^{\frac{1}{\beta}} \leq 1-\up{h}(x)_y \iff \frac{\ell_\beta(\up{h}, (x,y))}{\beta} \leq \ell_{\up{MAE}}(\up{h},(x,y)).
\end{align*}
In the following, we prove the second inequality in (18). Consider the function $g(\up{h}(x)_y) = \beta(1 - \up{h}(x)_y^{\frac{1}{\beta}}) - (1-\up{h}(x)_y)$ that defines the difference between the $\alpha$-loss and the MAE loss for given probability $\up{h}(x)_y$. Such function is decreasing in $\up{h}(x)_y$ since the derivative $g'(\up{h}(x)_y) = 1 - \up{h}(x)_y^{\frac{1}{\beta} - 1} \leq 0$ since $0 \leq \up{h}(x)_y \leq 1$ and $\frac{1}{\beta} - 1 \leq 0$, so that the minimum value of the function is achieved at $\up{h}(x)_y = 1$. Therefore, we have that $g(\up{h}(x)_y) \geq g(1)=0$ for all $\up{h}(x)_y \in [0,1]$ and $\beta \geq 1$ which proves the second inequality in (18).

We prove inequality (19) using the function $g(\up{h}(x)_y)$ as follows. As described above, the function is decreasing in $\up{h}(x)_y$ in the interval $[0,1]$. Then, the maximum value of the function is given by $g(0) = \beta - 1$ for $\beta \geq 1$. Therefore, we have that $g(\up{h}(x)_y)\leq \beta - 1 \implies \ell_\beta(\up{h}, (x,y)) - \ell_{\up{MAE}}(\up{h}, (x,y)) \leq \beta - 1$. 

\section{PROOF OF THEOREM 3}
\label{sec:proof_theorem3}
The function $\varphi_\beta(x,\B{\mu})$ is implicitly defined as 
\begin{equation}
\label{eq:optimality_condition_supp} 
    F(x, \B{\mu}, \varphi_\beta(x,\B{\mu})) = \underset{y \in \set{Y}}{\sum}\Big(\frac{f(x, \B{\mu})_y + \varphi_\beta(x,\B{\mu})}{\beta} + 1\Big)^{\beta}_{+} =1,
\end{equation}
and the gradient $\frac{\partial\varphi_\beta(x,\B{\mu})}{\partial \B{\mu}}$ can be obtained using implicit differentiation \citep{dontchev2009implicit} as follows.

For any instance $x \in \set{X}$ and parameters $\B{\mu}$, the function $\varphi_\beta$ satisfies \eqref{eq:optimality_condition_supp}. Therefore, differentiating both sides in \eqref{eq:optimality_condition_supp} with respect to $\B{\mu}$, we have
\begin{equation}
    \underset{y \in \set{Y}}{\sum} \Big(\frac{f(x, \B{\mu})_y + \varphi_\beta(x, \B{\mu})}{\beta} + 1\Big)_+^{\beta-1} \Big(\Phi(x,y) + \frac{\partial\varphi_\beta(x, \B{\mu})}{\partial\B{\mu}}\Big) = 0.\nonumber
\end{equation}
Thus, rearranging the terms, we have
\begin{equation}
    \label{eq:implicit_gradient}
    \frac{\partial\varphi_\beta(x, \B{\mu})}{\partial\B{\mu}} = - \frac{\underset{y \in \set{Y}}{\sum}\Big(\frac{f(x, \B{\mu})_y + \varphi_\beta(x, \B{\mu})}{\beta} + 1\Big)_+^{\beta-1} \Phi(x,y)}{\underset{j \in \set{Y}}{\sum} \Big(\frac{f(x, \B{\mu})_j + \varphi_\beta(x, \B{\mu})}{\beta} + 1\Big)_+^{\beta-1}}.
\end{equation}
Moreover, using Theorem~1, we have that the worst-case distribution is
\begin{equation}
    \label{eq:worst_case_dist}
    \up{p}_\beta(y|x) = \frac{\Big(\frac{f(x, \B{\mu})_y + \varphi_\beta(x, \B{\mu})}{\beta} + 1\Big)_+^{\beta-1}}{\underset{j \in \set{Y}}{\sum} \Big(\frac{f(x, \B{\mu})_j + \varphi_\beta(x, \B{\mu})}{\beta} + 1\Big)_+^{\beta-1}}
\end{equation}
corresponding with a classifier given by parameters $\B{\mu}$. Combining \eqref{eq:implicit_gradient} and \eqref{eq:worst_case_dist}, we obtain the gradient
\begin{equation}
    \frac{\partial\varphi_\beta(x, \B{\mu})}{\partial\B{\mu}} = - \underset{y \in \set{Y}}{\sum}\up{p}_\beta(y|x)\Phi(x,y),\nonumber
\end{equation}
which corresponds to~(22).

\section{PROOF OF COROLLARY 1}
\label{sec:proof_corollary_1}
In the following, we show that the MGCE margin loss $\ell_\beta$ in \eqref{eq:margin_loss} is classification-calibrated, that is, for any $x$ and $\beta > 1$, if $\up{p}^*$ denotes the underlying distribution of the data and
\begin{equation}
    \label{eq:bayes_minimization}
    \B{\mu}^*\in\arg\min_{\B{\mu}} \sum_y \up{p}^*(y|x) \ell_\beta(\B{\mu},(x,y)),
\end{equation}
then
\begin{equation}
    \arg\max_y f(x,\B{\mu}^*)_y=\arg\max_y \up{p}^*(y|x).
\end{equation}
The minimization in \eqref{eq:bayes_minimization} corresponding with the MGCE margin loss is given as
\begin{equation}
     \underset{\B{\mu}}{\arg\min}\sum_{y} \Big(-f(x,\B{\mu})_y-\varphi_\beta(x,\B{\mu})\Big)\up{p}^*(y|x).
\end{equation}
The stationary condition for this minimization problem can be derived using the gradients in \eqref{eq:subgrad_beta} and \eqref{eq:stoc_grad}. In particular, such condition is given by
\begin{equation}
    \label{eq:optimality_condition_rev}
    \sum_{y=1}^k (\up{p}^*(y|x)-\up{p}_\beta(y|x))\Phi(x,y) = 0,
\end{equation}
where $\up{p}_\beta$ is the worst-case distribution corresponding to $\B{\mu}^*$ as given in \eqref{eq:worst_case_sol}. The above equality implies that
\begin{equation}
\up{p}^*(y|x)=\up{p}_\beta(y|x),    
\end{equation}
since the one-hot encoded vectors $\Phi(x,1), \Phi(x,2), \ldots, \Phi(x,k)$ are linearly independent. Therefore, using the relation in \eqref{eq:minmax_probs}, the corresponding minimax classifier $\up{h}_\beta(x)_y$ satisfies 
\begin{equation}
    \up{h}_\beta(x)_y \propto{\up{p}^*(y|x)^\frac{\beta}{\beta-1}},
\end{equation}
which implies that $\arg\max_y f(x,\B{\mu}^*)_y=\arg\max_y \up{p}^*(y|x)$ since $\arg\max_y \up{h}_\beta(x)_y=\arg\max_y f(x,\B{\mu}^*)_y$, and $\frac{\beta}{\beta-1}>1$.

% \section{Proof of Theorem 4}
% \label{sec:proof_cor1}
% \begin{proof}
%     Let $\mathcal{R}_\ell(h)$ and $\hat{\mathcal{R}}_\ell(h)$ denote the classification risk corresponding with clean and noisy data for a classification rule $\up{h}$. The proof of Theorem 1 in \cite{zhang2018generalized} presents a bound between the classification risk corresponding with clean and noisy distribution as
%     \begin{equation}
%         \label{eq:clean_noisy_bound}
%         \mathcal{R}_{\beta}(\up{h}) \leq \frac{\hat{\mathcal{R}}_\beta(\up{h}) - \beta\eta(k-k^{(\beta-1)/\beta})/(k-1)}{1-\eta k/(k-1)}.
%     \end{equation}
%     Using Remark~\ref{rm:1}, we have that $\hat{\mathcal{R}}_{\beta}(\hat{\up{h}}_{\beta}) \leq \hat{\up{R}}_{\beta}$ and $\mathcal{R}_{0-1}(\hat{\up{h}}^{\beta}) \leq \mathcal{R}_{\beta}(\hat{\up{h}}_\beta)$. Using these inequalities and plugging $\hat{\up{h}}_\beta$ in \eqref{eq:clean_noisy_bound}, we complete the result.
% \end{proof}
\begin{figure}[h]  % [!t] tries to place figure at top
  \centering
  % \psfrag{1234567891234}[c][c][0.7]{0-1}
  % \psfrag{1234567891235}[c][c][0.7]{CE}
\psfrag{1234567891234}[l][l][0.9]{MGCE}
     \psfrag{1234567891235}[l][l][0.9]{GCE}
     \psfrag{y}[c][c][0.9]{Accuracy (\%)}
     \psfrag{x}[c][c][0.9]{Epochs}
    \psfrag{loss}[c][t][0.9]{Loss}
    \psfrag{0}[r][r][0.8]{0}
    \psfrag{150}[r][r][0.8]{150}
    \psfrag{250}[r][r][0.8]{250}
    \psfrag{45}[r][r][0.8]{45}
    \psfrag{15}[r][r][0.8]{15}
    \psfrag{25}[r][r][0.8]{25}
    \psfrag{35}[r][r][0.8]{35}
    \psfrag{2}[c][c][0.8]{2}
    \psfrag{4}[c][c][0.8]{4}
    \psfrag{6}[c][c][0.8]{6}
    \psfrag{8}[c][c][0.8]{8}
    \psfrag{10}[c][c][0.8]{10}
    \psfrag{a}[r][r][0.8]{0}
    \psfrag{b}[r][r][0.8]{50}
    \psfrag{c}[r][r][0.8]{100}
  % ========== Single figure spanning one column ==========
  \includegraphics[width=0.5\columnwidth]{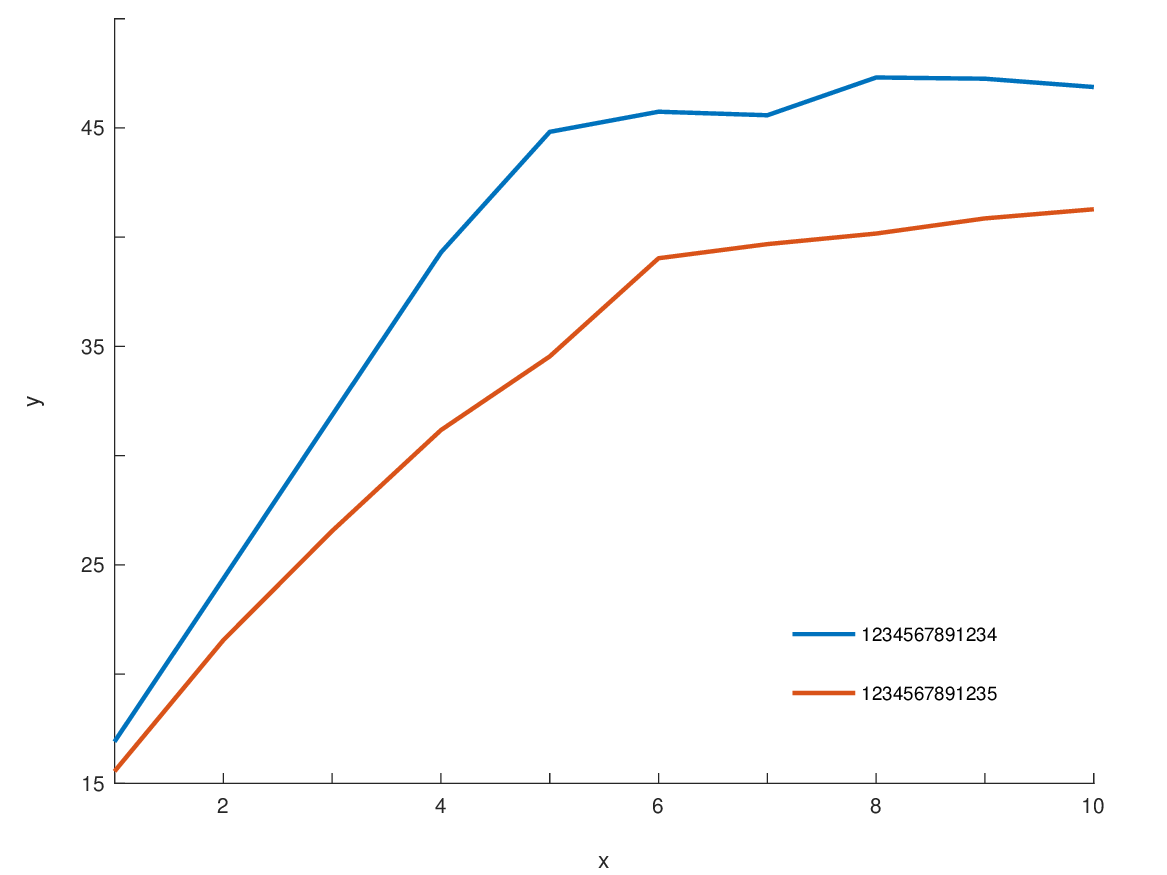}  % adjust filename
  \caption{Top-1 test accuracy on the real-world noisy dataset Clothing-1M. 
The figure shows that the proposed MGCE outperforms GCE, which underfits on this complex dataset with 1 million training samples with noisy labels.}
  \label{fig:clothing_1m}
  \vspace{0.2cm}
\end{figure}

\section{EXPERIMENTATION DETAILS AND ADDITIONAL RESULTS}
\label{sec:add_results}

\subsection{Selection of regularization parameter}
\label{sec:regularization_param_search}
Table~\ref{tb:regularization} shows the validation accuracies for multiple common values of regularization parameter $\lambda_0$ that appear in the literature (see e.g., \cite{zhang2018generalized, ma2020normalized, zhou2021asymmetric}). The table shows that the effect of the regularization parameter $\lambda_0$ is similar in the proposed method MGCE and existing techniques GCE. Based on the comparison, we use the same value of $10^{-5}$ for $\lambda_0$ in all datasets and methods.

\begin{table}[h]
\centering
\renewcommand{\arraystretch}{1.5} % Adjust row height
\resizebox{\textwidth}{!}{\begin{tabular}{|c|c|c|c|c|c|c|}
\hline
\multirow{2}{*}{Dataset} 
 & \multicolumn{2}{c|}{$\lambda_0 = 10^{-3}$} 
 & \multicolumn{2}{c|}{$\lambda_0 =10^{-4}$} 
 & \multicolumn{2}{c|}{$\lambda_0 =10^{-5}$} \\
\cline{2-7}
 & MGCE & GCE & MGCE & GCE & MGCE & GCE \\
\hline
SVHN & 80.42 $\pm$ 0.57 & 85.00 $\pm$ 1.06 & 92.41 $\pm$ 0.28 & 93.53 $\pm$ 0.13 & 94.58 $\pm$ 0.02 & 93.32 $\pm$ 0.15\\ 
CIFAR10 & 67.40 $\pm$ 1.27 & 69.32 $\pm$ 0.93 & 87.98 $\pm$ 0.53 & 88.06 $\pm$ 0.39 & 91.49 $\pm$ 0.53 & 91.11 $\pm$ 0.09 \\ 
FashionMNIST & 89.56 $\pm$ 0.53 & 89.51 $\pm$ 0.33 & 93.09 $\pm$ 0.55 & 92.74 $\pm$ 0.43 & 92.79 $\pm$ 0.50 & 92.71 $\pm$ 0.31 \\ 
\hline
\end{tabular}}
\caption{Average validation accuracies for different $\lambda_0$}
\label{tb:regularization}
\end{table}

\subsection{Experimental setup for the real-world noisy WebVision dataset}
\label{sec:real_world_setup}
The training details follow \cite{ma2020normalized}, where for each method, we train a ResNet-50 architecture \citep{he2016deep} using SGD for 250 epochs with initial learning rate 0.4, momentum 0.9, and L1 weight $10^{-5}$ and batch size 512. The learning rate is multiplied by 0.97 after every epoch of training. All the images are resized to 224 × 224. We also use the common data augmentations of random width/height shift, color jittering, and random horizontal flip are applied.

% \vspace{-2.4cm}
\subsection{Evaluation on real-world noisy Clothing-1M dataset}
 Clothing-1M \citep{xiao2015learning} is a large-scale dataset with real-world noisy labels, whose images are clawed from the online shopping websites, and labels are generated based on surrounding texts. It contains 1 million training images with noisy labels, and 15k validation images, and 10k test images with clean labels. We train a ResNet-50 architecture \citep{he2016deep} on the noisy training dataset and then validate its performance using the cleas test images. Following \cite{park2023robust}, we train the architecture using SGD for 10 epochs with initial learning rate 0.002, momentum 0.9, and L1 weight $10^{-5}$ and batch size 512. The learning rate is dropped by a factor of 10 at the halfway point of the training epochs.

\begin{figure}[t]
  \centering
  \begin{subfigure}[b]{0.48\textwidth}
    \centering
    \psfrag{12345678912345678912345}[l][l][0.9]{MGCE}
     \psfrag{12345678912345678912346}[l][l][0.9]{GCE}
     \psfrag{12345678912345678912347}[l][l][0.9]{$1-\up{V}_{1.40}$}
     \psfrag{12345678912345678912348}[l][l][0.9]{$1-\up{V}_{1.18}$}
     \psfrag{12345678912345678912349}[l][l][0.9]{$1-\up{V}_{1.05}$}
     \psfrag{12345678912345678912343}[l][l][0.9]{$1-\up{V}_{\up{MAE}}$}
     \psfrag{y}[c][c][0.9]{Test accuracy (\%)}
     \psfrag{x}[c][c][0.9]{Epochs}
    \psfrag{loss}[c][t][0.9]{Loss}
    \psfrag{50}[r][r][0.8]{50}
    \psfrag{0}[r][r][0.8]{0}
    \psfrag{60}[r][r][0.8]{}
    \psfrag{a}[r][r][0.8]{}
    \psfrag{b}[c][c][0.8]{$10^1$}
    \psfrag{c}[c][c][0.8]{$10^2$}
    \psfrag{80}[r][r][0.8]{80}
    \psfrag{100}[r][r][0.8]{100}
    \psfrag{60}[r][r][0.8]{60}
    \psfrag{40}[r][r][0.8]{40}
    \psfrag{20}[r][r][0.8]{20}
    \includegraphics[width=\linewidth]{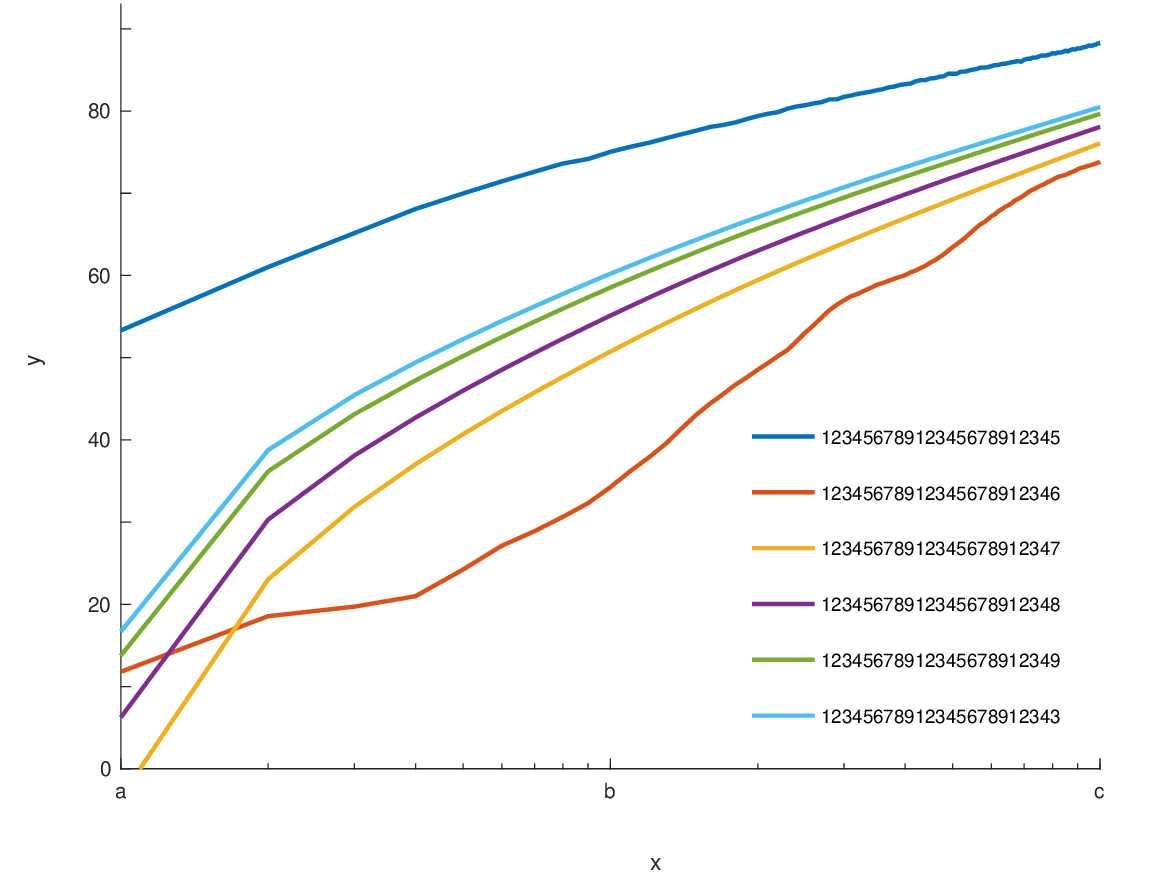} % replace with your file
    \caption{letter recognition}
    \label{fig:letter}
  \end{subfigure}
  \hfill
  \begin{subfigure}[b]{0.48\textwidth}
    \centering
    \psfrag{123456789123456781}[l][l][0.9]{MGCE}
     \psfrag{123456789123456782}[l][l][0.9]{GCE}
     \psfrag{123456789123456783}[l][l][0.9]{$\up{V}_{1.40}$}
     \psfrag{123456789123456784}[l][l][0.9]{$\up{V}_{1.18}$}
     \psfrag{123456789123456785}[l][l][0.9]{$\up{V}_{1.05}$}
     \psfrag{123456789123456786}[l][l][0.9]{$\up{V}_{\up{MAE}}$}
     \psfrag{y}[c][c][0.9]{Test accuracy (\%)}
     \psfrag{x}[c][c][0.9]{Epochs}
    \psfrag{loss}[c][t][0.9]{Loss}
    \psfrag{50}[r][r][0.8]{50}
    \psfrag{0}[r][r][0.8]{0}
    \psfrag{60}[r][r][0.8]{}
    \psfrag{a}[r][r][0.8]{}
    \psfrag{b}[c][c][0.8]{$10^1$}
    \psfrag{c}[c][c][0.8]{$10^2$}
    \psfrag{66}[r][r][0.8]{66}
    \psfrag{70}[r][r][0.8]{70}
    \psfrag{74}[r][r][0.8]{74}
    \psfrag{78}[r][r][0.8]{78}
    \psfrag{82}[r][r][0.8]{82}
    \psfrag{86}[r][r][0.8]{86}
    \includegraphics[width=\linewidth]{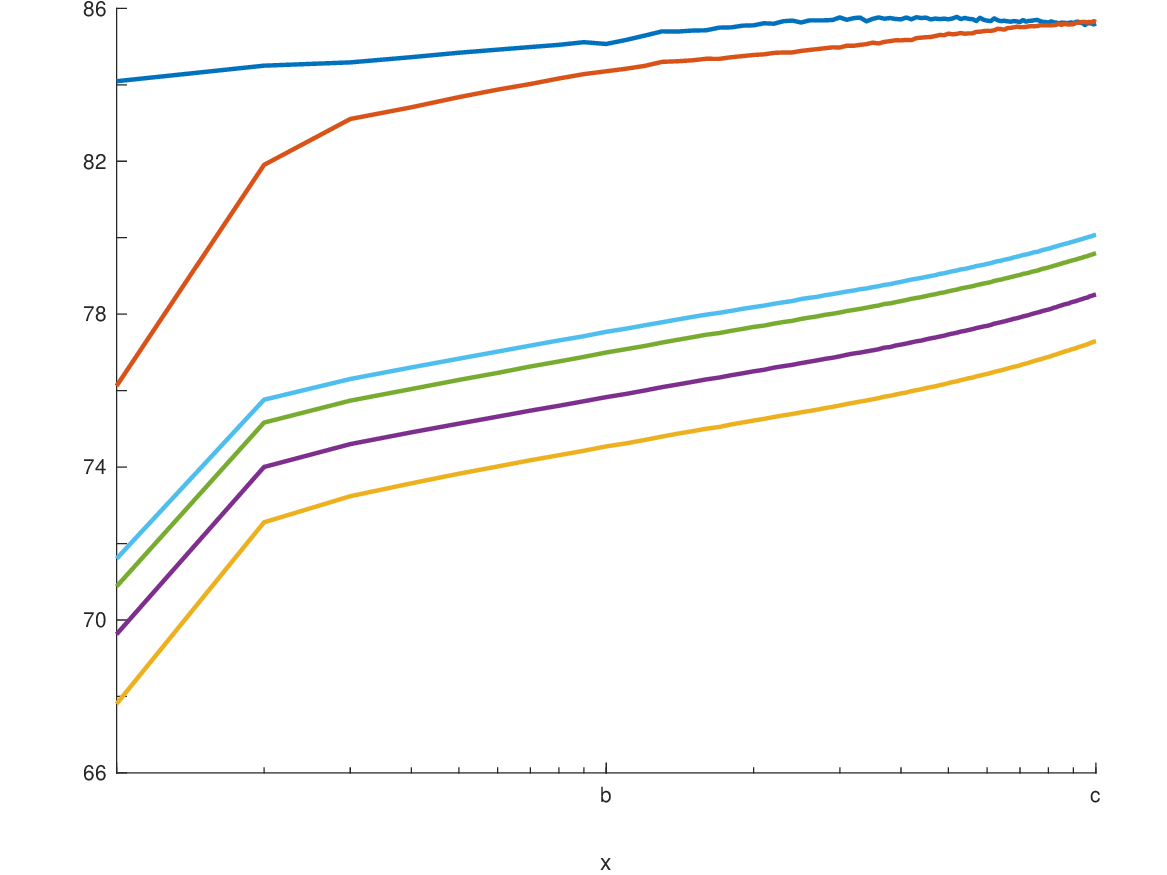} % replace with your file
    \caption{adult}
    \label{fig:adult}
  \end{subfigure}
  \vspace{0.8cm}
  \begin{subfigure}[b]{0.48\textwidth}
    \centering
    \psfrag{123456789123456781}[l][l][0.9]{MGCE}
     \psfrag{123456789123456782}[l][l][0.9]{GCE}
     \psfrag{123456789123456783}[l][l][0.9]{$1-\up{V}_{1.40}$}
     \psfrag{123456789123456784}[l][l][0.9]{$1-\up{V}_{1.18}$}
     \psfrag{123456789123456785}[l][l][0.9]{$1-\up{V}_{1.05}$}
     \psfrag{123456789123456786}[l][l][0.9]{$1-\up{V}_{\up{MAE}}$}
     \psfrag{y}[c][c][0.9]{Test accuracy (\%)}
     \psfrag{x}[c][c][0.9]{Epochs}
    \psfrag{loss}[c][t][0.9]{Loss}
    \psfrag{50}[r][r][0.8]{50}
    \psfrag{0}[r][r][0.8]{0}
    \psfrag{60}[r][r][0.8]{60}
    \psfrag{40}[r][r][0.8]{40}
    \psfrag{a}[r][r][0.8]{}
    \psfrag{b}[c][c][0.8]{$10^1$}
    \psfrag{c}[c][c][0.8]{$10^2$}
    \psfrag{66}[r][r][0.8]{66}
    \psfrag{70}[r][r][0.8]{70}
    \psfrag{74}[r][r][0.8]{74}
    \psfrag{78}[r][r][0.8]{78}
    \psfrag{82}[r][r][0.8]{82}
    \psfrag{86}[r][r][0.8]{86}
    \includegraphics[width=\linewidth]{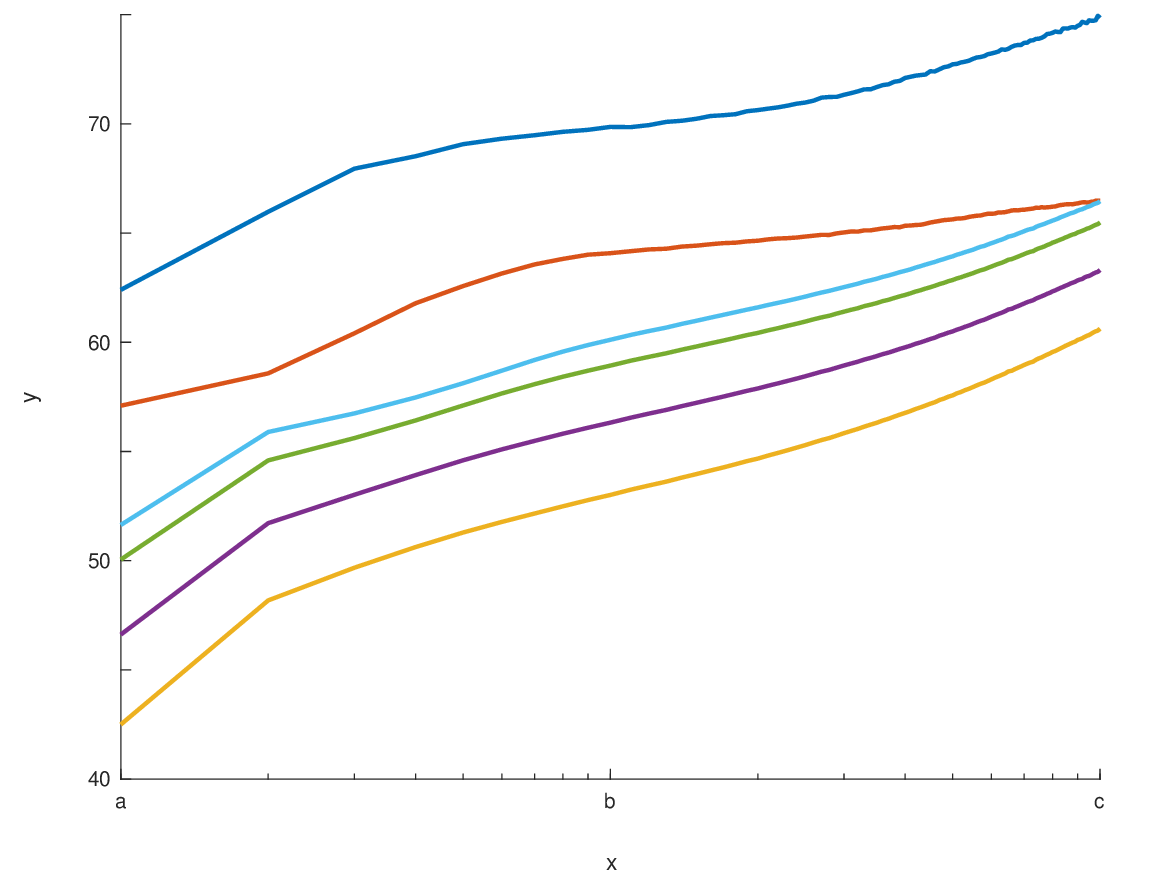} % replace with your file
    \caption{covertype}
    \label{fig:covtype}
  \end{subfigure}
  \hfill
  \begin{subfigure}[b]{0.48\textwidth}
    \centering
    \psfrag{123456789123456781}[l][l][0.9]{MGCE}
     \psfrag{123456789123456782}[l][l][0.9]{GCE}
     \psfrag{123456789123456783}[l][l][0.9]{$\up{V}_{1.40}$}
     \psfrag{123456789123456784}[l][l][0.9]{$\up{V}_{1.18}$}
     \psfrag{123456789123456785}[l][l][0.9]{$\up{V}_{1.05}$}
     \psfrag{123456789123456786}[l][l][0.9]{$\up{V}_{\up{MAE}}$}
     \psfrag{y}[c][c][0.8]{}
     \psfrag{x}[c][c][0.9]{Epochs}
    \psfrag{loss}[c][t][0.9]{Loss}
    \psfrag{40}[r][r][0.8]{40}
    \psfrag{0}[r][r][0.8]{0}
    \psfrag{50}[r][r][0.8]{}
    \psfrag{60}[r][r][0.8]{60}
    \psfrag{a}[r][r][0.8]{}
    \psfrag{b}[c][c][0.8]{$10^1$}
    \psfrag{c}[c][c][0.8]{$10^2$}
    \psfrag{70}[r][r][0.8]{}
    \psfrag{74}[r][r][0.8]{74}
    \psfrag{100}[r][r][0.8]{100}
    \psfrag{80}[r][r][0.8]{80}
    \psfrag{90}[r][r][0.8]{}
    \includegraphics[width=\linewidth]{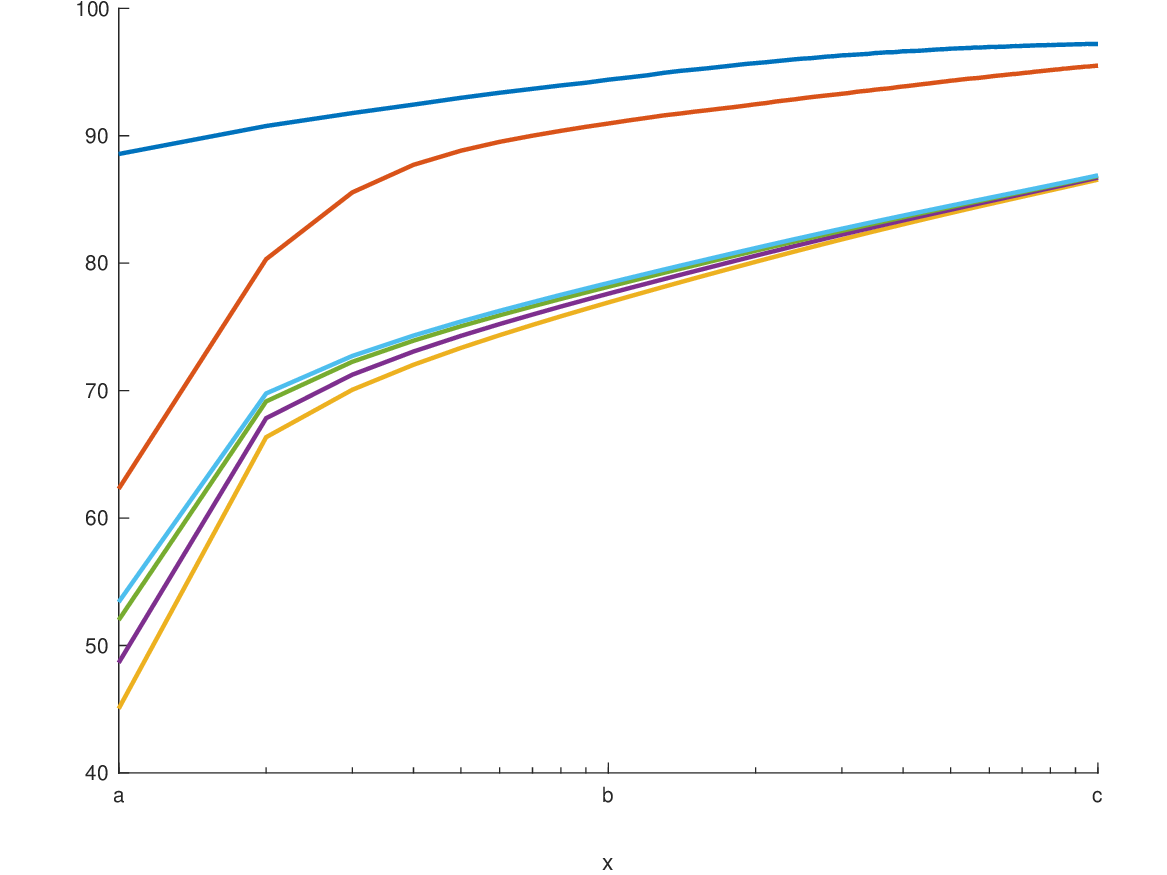} % replace with your file
    \caption{MNIST}
    \label{fig:MNIST}
  \end{subfigure}
  \caption{Average test accuracy under clean data training data obtained for multiple tabular datasets. The value of loss parameter $\beta$ is set to 1.4 for both MGCE and GCE. In addition, the figure reports the objective values $\up{V}_{1.40}$, $\up{V}_{1.18}$, $\up{V}_{1.05}$, and $\up{V}_{\up{MAE}}$ corresponding with $\beta=1.40$, $\beta=1.18$, $\beta=1.05$, and $\beta=1.00$, respectively, that provide a lower bound on the test accuracy. The figure shows the fast convergence of the proposed MGCE in comparison to GCE, and that the bounds can provide a minimum estimate on the classification accuracy.}
  \label{fig:convergence_tab}
  \vspace{-0.5cm}
\end{figure}

 Figure~\ref{fig:clothing_1m} reports the accuracy across the epochs for the test images having clean labels. The results show that the proposed MGCE method outperforms GCE method using the same value $\beta=1.4$ as in the main paper.

\vspace{-0.2cm}
\subsection{Additional experimental results using tabular datasets}
A simple \ac{MLP} classifier with 1024 hidden units was used for training. The model was optimized using SGD with a learning rate of 0.001, following the same training strategy as in the main experiments (see Section 5.1). We evaluate the model on four tabular datasets: MNIST (70K samples, 10 classes), Letter Recognition (20K samples, 26 classes), Covertype (110K samples, 7 classes), and Adult (48K samples, 2 classes).

% \begin{table*}[h!]
% \centering
% \caption{Datasets.}
% \label{tab:example_table}
% \begin{tabular}{lcc}
% \hline
% Dataset & Samples & Classes \\ 
% \hline
% MNIST & 70000 & 10 \\
% letter recognition & 20000 & 26 \\
% covertype & 110393 & 7 \\
% adult & 48842 & 2 \\
% \hline
% \end{tabular}
% % \vspace{-0.4cm}
% \end{table*}

We compare the MGCE and GCE methods in terms of test accuracy achieved along the epochs, using multiple tabular datasets and a simple MLP classifier. In addition, we assess the objective value $\up{V}_\beta$ of the proposed MGCE method in as an upper bound on the MAE classification risk of the classifier as discussed in Section 3.2 of the main paper. 

Figure~\ref{fig:convergence_tab} reports the test accuracy as well as the lower bound it (or upper bound on the classification error) across the epochs corresponding with different values of $\beta$. The results are reported for $\beta = 1.4$, that is, the same value used for all the figures in the main paper. The results show that the proposed MGCE method achieves faster convergence than the GCE method, and results in improved accuracy. Moreover, the upper bound on the classification error $\up{V}_\beta$ obtained by the MGCE method provides an estimate on the minimum test accuracy achieved by the classifier that is tighter for smaller values of $\beta$ as shown in Theorem~2.

We also evaluate the performance of MGCE and GCE across general values of the parameter $\beta \in (1.05, 11)$. We use similar setup and evaluation metrics as in Table~1 of main paper that assesses complex DNNs. In particular, we compare with the baselines provided by CE ($\beta=\infty$) and the convex 0–1 loss introduced in \cite{MazRomGrun:23}. The results in Table~\ref{tab:loss_comparison} show that the proposed MGCE method attains strong accuracy improving on the existing methods especially in multi-class datasets.
\begin{table*}[h]
\centering
\setlength{\tabcolsep}{3pt} % Adjust horizontal spacing
\renewcommand{\arraystretch}{1.2} % Adjust row height
\begin{tabular}{c|c|c|c|c|c|c|c}
\hline
\multirow{2}{*}{Dataset} & \multirow{2}{*}{Loss} 
& \multicolumn3{c|}{Test Accuracy (\%)} 
& \multicolumn3{c}{SCE (\%)} \\
\cline{3-8} 
& 
& $\eta=0.0$ & $\eta=0.2$ & $\eta=0.4$ & $\eta=0.0$ & $\eta=0.2$ & $\eta=0.4$ \\
\hline
\multirow{4}{*}{MNIST} & MGCE
     & \textbf{97.28 $\pm$ 0.07}
     & \textbf{96.15 $\pm$ 0.05}
     & \textbf{94.99 $\pm$ 0.05}
    & 0.79 $\pm$ 0.06
    & \textbf{6.52 $\pm$ 0.14}
    & 10.53 $\pm$ 0.15
\\
 & GCE
     & 97.14 $\pm$ 0.04
     & 95.74 $\pm$ 0.07
     & 94.19 $\pm$ 0.23
    & \textbf{0.37 $\pm$ 0.03}
    & 17.3 $\pm$ 0.3
    & \textbf{1.43 $\pm$ 0.07}
\\
 & CE
     & 97.17 $\pm$ 0.04
     & 95.59 $\pm$ 0.1
     & 94.02 $\pm$ 0.09
    & 0.4 $\pm$ 0.07
    & 21.18 $\pm$ 0.76
    & 39.46 $\pm$ 0.87
\\
 & MAE
     & 97.24 $\pm$ 0.03
     & 96.1 $\pm$ 0.06
     & 94.95 $\pm$ 0.13
    & 80.03 $\pm$ 0.12
    & 81.18 $\pm$ 0.22
    & 80.55 $\pm$ 0.23
\\
\hline
\multirow{4}{*}{covertype} & MGCE
     & \textbf{77.24 $\pm$ 0.09}
     & \textbf{75.83 $\pm$ 0.21}
     & \textbf{74.08 $\pm$ 0.07}
    & 1.74 $\pm$ 0.17
    & \textbf{12.58 $\pm$ 0.48}
    & \textbf{25.55 $\pm$ 0.2}
\\
 & GCE
     & 76.93 $\pm$ 0.07
     & 75.54 $\pm$ 0.14
     & 73.99 $\pm$ 0.08
    & \textbf{1.43 $\pm$ 0.14}
    & 12.89 $\pm$ 0.33
    & 25.7 $\pm$ 0.74
\\
 & CE
     & 77.28 $\pm$ 0.15
     & 75.8 $\pm$ 0.08
     & 74.11 $\pm$ 0.17
    & 3.16 $\pm$ 0.2
    & 16.41 $\pm$ 0.36
    & 29.07 $\pm$ 0.58
\\
 & MAE
     & 75.58 $\pm$ 0.09
     & 73.13 $\pm$ 0.08
     & 71.43 $\pm$ 0.22
    & 54.96 $\pm$ 0.32
    & 52.94 $\pm$ 0.1
    & 51.38 $\pm$ 0.24
\\
\hline
\multirow{4}{*}{letter} & MGCE
     & \textbf{90.63 $\pm$ 0.19}
     & \textbf{87.29 $\pm$ 0.19}
     & \textbf{83.94 $\pm$ 0.22}
    & \textbf{3.28 $\pm$ 0.3}
    & \textbf{9.97 $\pm$ 0.28}
    & 40.03 $\pm$ 0.33
\\
 & GCE
     & 86.64 $\pm$ 0.13
     & 85.19 $\pm$ 0.29
     & 82.67 $\pm$ 0.22
    & 8.44 $\pm$ 0.15
    & 25.03 $\pm$ 0.19
    & \textbf{38.9 $\pm$ 0.26}
\\
 & CE
     & 87.59 $\pm$ 0.09
     & 86.19 $\pm$ 0.21
     & 83.66 $\pm$ 0.19
    & 9.25 $\pm$ 0.14
    & 30.72 $\pm$ 0.28
    & 44.42 $\pm$ 0.18
\\
 & MAE
     & 90.58 $\pm$ 0.16
     & 87.15 $\pm$ 0.29
     & 83.09 $\pm$ 0.52
    & 83.99 $\pm$ 0.15
    & 81.5 $\pm$ 0.28
    & 77.76 $\pm$ 0.49
\\
\hline
\multirow{4}{*}{adult} & MGCE
     & \textbf{85.68 $\pm$ 0.13}
     & 84.56 $\pm$ 0.19
     & 82.63 $\pm$ 0.39
    & 7.85 $\pm$ 0.37
    & 12.13 $\pm$ 0.57
    & 6.02 $\pm$ 2.47
\\
 & GCE
     & \textbf{85.68 $\pm$ 0.09}
     & \textbf{85.33 $\pm$ 0.16}
     & \textbf{83.51 $\pm$ 0.6}
    & 1.2 $\pm$ 0.24
    & \textbf{5.69 $\pm$ 0.1}
    & \textbf{5.21 $\pm$ 1.7}
\\
 & CE
     & \textbf{85.68 $\pm$ 0.08}
     & 84.56 $\pm$ 0.21
     & 81.38 $\pm$ 0.69
    & \textbf{0.76 $\pm$ 0.1}
    & 14.3 $\pm$ 0.78
    & 23.48 $\pm$ 1.73
\\
 & MAE
     & 85.49 $\pm$ 0.17
     & 83.85 $\pm$ 0.24
     & 82.01 $\pm$ 0.38
    & 16.72 $\pm$ 1.37
    & 19.36 $\pm$ 0.21
    & 18.69 $\pm$ 1.41
\\
    \hline
\end{tabular}
\caption{Average test accuracy and static calibration error (SCE) for clean and noisy tabular datasets. The ratio $\eta$ represents the fraction of samples corrupted by symmetric label noise. We report accuracies of the epoch where validation accuracy is maximum. Our proposed MGCE method obtains strong accuracy.}
\label{tab:loss_comparison}
\end{table*}

% \vfill
% \bibliography{ref}

% \end{document}